
\documentclass[Afour,sageh,times]{sagej}

\usepackage{moreverb,url}

\usepackage[colorlinks,bookmarksopen,bookmarksnumbered,citecolor=red,urlcolor=red]{hyperref}

\usepackage{url}

\usepackage[utf8]{inputenc}
\usepackage{graphicx}
\usepackage{amsmath,amsthm}
\usepackage{amssymb,xcolor}
\usepackage{mathtools}
\usepackage{cancel}
\usepackage{enumitem}
\usepackage[noend]{algorithmic}
\usepackage{algorithm}
\usepackage{wrapfig}
\usepackage{hyperref}
\usepackage{tikz-cd}
\usepackage{tikz}
\usetikzlibrary{automata, positioning}

\usepackage[scriptsize]{subfigure}
\usepackage{epsfig} 

\usepackage{xcolor}
\definecolor{weinrot}{rgb}{0.7,0.15,0.15}

\newcommand{\BS}[1]{{{\color{black}{#1}}}}

\newcommand{\KT}[1]{{{\color{black}{#1}}}}
\newcommand{\new}[1]{{{\color{black}{#1}}}}
\newcommand{\newnew}[1]{{{\color{black}{#1}}}}

\newcommand*{\T}{\mathcal{T}}

\newtheorem{proposition}{Proposition}
\newtheorem{remark}{Remark}
\newtheorem{problem}{Problem}

\newtheorem{claim}{Claim}

\theoremstyle{definition}
\newtheorem{example}{Example}
\newtheorem{definition}{Definition}

\newcommand\BibTeX{{\rmfamily B\kern-.05em \textsc{i\kern-.025em b}\kern-.08em
T\kern-.1667em\lower.7ex\hbox{E}\kern-.125emX}}

\hyphenation{Robot-Environment}

\def\Re{{\mathbb{R}}}

\def\histx{{\tilde{x}}}

\def\histy{{\tilde{y}}}
\def\histY{{\tilde{Y}}}
\def\histu{{\tilde{u}}}
\def\histU{{\tilde{U}}}

\def\E{{\cal{E}}}
\def\N{{\mathbb{N}}}
\def\R{{\cal{R}}}

\def\is{{\iota}}
\def\his{{\eta}}
 
\def\imap{{\kappa}}

\def\fmaphist{{\phi_{hist}}}
\def\Fmaphist{{\Phi_{hist}}}

\def\xdot{{\dot x}}

\def\K{{\cal K}}

\def\atan2{\operatorname{atan2}}
\def\pow{{\rm pow}}

\def\I{{\cal I}}
\def\ifs{{\cal I}} 
\def\Ifs{{\cal I}}

\def\ifshist{{\cal I}_{hist}}

\def\Ifsder{{\cal I}_{der}}

\def\Ifshist{{\cal I}_{hist}}

\def\Ifsmin{{\cal I}_{min}}
\def\Ifst{{\cal I}_{task}}

\def\imapb{{\kappa_{task}}}

\def\Ifstask{{\cal I}_{task}}

\newcommand{\cat}{{}^\frown}

\newcommand{\cG}{\mathcal{G}}
\newcommand{\cE}{\mathcal{E}}
\newcommand{\cT}{\mathcal{T}}
\newcommand{\Ifspsr}{\mathcal{I}_\textrm{PSR}}

\usepackage{acronym}
\acrodef{Ispace}[I-space]{\emph{information space}}
\acrodef{Istate}[I-state]{\emph{information state}}
\acrodef{Imap}[I-map]{\emph{information mapping}}
\acrodef{ITS}[ITS]{\emph{information transition system}}
\acrodef{ITSs}[ITSs]{\emph{information transition systems}}
\acrodef{DITS}[DITS]{\emph{deterministic information transition system}}
\acrodef{NITS}[NITS]{\emph{nondeterministic information transition system}}
\acrodef{POMDPs}[POMDPs]{\emph{partially observable Markov decision processes}}
\acrodef{PSRs}[PSRs]{\emph{predictive state representations}}

\setcounter{secnumdepth}{3}

\begin{document}

\runninghead{Sakcak et al.}

\title{A Mathematical Characterization of Minimally Sufficient Robot
  Brains
}

\author{Basak Sakcak, Kalle G. Timperi, Vadim Weinstein, and Steven
  M. LaValle}

\affiliation{The authors are with Center for Ubiquitous Computing, Faculty of
  Information Technology and Electrical Engineering, University of
  Oulu, Finland.  {\tt\small (e-mail: firstname.lastname@oulu.fi)}}

\corrauth{Basak Sakcak, Center for Ubiquitous Computing Faculty of
  Information Technology and Electrical Engineering, University of
  Oulu, Finland.}

\email{basak.sakcak@oulu.fi}

\begin{abstract}
  This paper addresses the lower limits of encoding and processing the information
  acquired through interactions between an internal system (robot algorithms or software) and an external system (robot body and its
  environment) in terms of action and observation histories. Both
  are modeled as transition systems. We want to know the
  weakest internal system that is sufficient for achieving passive
  (filtering) and active (planning) tasks. We introduce the notion of an
  \ac{ITS} for the internal system which is a transition system over a
  space of information states that reflect a robot's or other observer's
  perspective based on limited sensing, memory, computation, and
  actuation. An \ac{ITS} is viewed as a filter and a policy or plan is
  viewed as a function that labels the states of this
  \ac{ITS}. Regardless of whether internal systems are obtained by
  learning algorithms, planning algorithms, or human insight, we want
  to know the limits of feasibility for given robot hardware and
  tasks. We establish, in a general setting, that minimal information
  transition systems (ITSs) exist up to reasonable equivalence
  assumptions, and are unique under some general conditions.  We then
  apply the theory to generate new insights into several problems,
  including optimal sensor fusion/filtering, solving basic planning
  tasks, and finding minimal representations for modeling a system
  given input-output relations. 
\end{abstract}
\keywords{Planning, transition systems, sensing uncertainty, sensor fusion, filtering, information spaces, machine learning, control theory, theoretical foundations}

\maketitle

\section{Introduction}

Accomplishing a robot's tasks may involve designing or employing a combination of \BS{different parts:} planning algorithms, sensor fusion or filtering methods, machine learning algorithms, and control laws. Given a problem expressed in terms of a well-defined task structure, the relationship between these different parts
with each other, and \BS{their relation} with the task is often ignored. 
\BS{Each part is developed rather in isolation, heavily motivated by the long lasting traditions in
robotics.}
For example, navigating a mobile robot to a goal
configuration is typically achieved by estimating the robot configuration
in a known (or an unknown) map and applying a feedback policy that
relies on the estimated configuration. 
\BS{Indeed, for some problems it may be possible that a simpler filtering approach that does not require estimating the full state could as well be sufficient to achieve the given task.}
\BS{This may lead many to believe that robotics itself does not have its own, unique theoretical core (on this, we agree with \citep{Kod21}) and it appears as an application area for other fields; designing and testing machine learning algorithms, planning algorithms, sensor fusion methods, control laws, and so on.}
\BS{In our quest towards a theory that is unique to robotics and that plays a similar role to Turing machines for computer science, or $\xdot = f(x,u)$ over differentiable manifolds for control theory, we want to establish the limits of the intertwined notions of sensing, learning, filtering, and planning with respect to a given problem.}
We would like to have a general framework that allows
researchers to formulate and potentially answer questions such as: Does a
solution even exist to a given problem?  What are the minimal
necessary components to solve it?  What should the best learning
approach imaginable produce as a representation?  Such questions would
be analogous to existence and uniqueness in control and dynamical
systems, or decidability and complexity (especially Kolmogorov) in
theoretical computer science.

This paper proposes a mathematical robotics theory that is built from the input-output relationships between two (or more) coupled dynamical systems. For a programmable \BS{mechanical} system (robot) embedded in an environment, the input-output relationships correspond to sensing and actuation between two coupled systems; an \emph{internal system} (robot brain) and an \emph{external system} (robot body and the environment). This relation is shown in Figure~\ref{fig:extint}(a-b). 
We assume that the robot hardware is fixed, which means fixing the
sensors and the actuators, and we focus on
\newnew{determining which necessary and sufficient conditions the internal system has to maintain for a task to be accomplished.}
In light of these conditions, we try to find a
\emph{minimal sufficient} internal system which corresponds to the
weakest possible representation of the acquired information through
interactions; \BS{reducing} the internal system any further makes the
problem unsolvable.

At the core of our framework is the notion of an \ac{ITS} which builds on
the well-studied notion of {transition systems}. The
\emph{information} part of an ITS comes from {\em information spaces}
\cite[Chapter 11]{Lav06} which is developed as a foundation of planning
with imperfect state information due to sensing uncertainty.  The
concept of {\em sufficient information mappings} appears therein.  It
is generalized in this paper, and the state space of each ITS will in
fact be an information space. An internal system will be modeled as an
\ac{ITS} and its sufficiency and minimality with respect to a problem
will be analyzed using this framework.

We categorize tasks into two classes: active and passive. Informally,
a passive task corresponds to filtering and an active one corresponds
to planning or control. In our work an \ac{ITS} can be seen as a
filter and together with its underlying information space serves as a
domain over which a plan or a policy can be expressed and analyzed. We
will consider a variety of information spaces, which also encompass
robot configuration spaces or phase spaces, that are typically used for
planning tasks.  A distinction between \emph{model-based} and
\emph{model-free} formulations will be considered too, in line with the
choices commonly found in machine learning. We characterize the problems corresponding to the active and passive tasks and define notions of feasibility and minimality for \ac{ITSs} that solve these problems.
In our approach to
finding minimal sufficient \ac{ITSs}, we will analyze the limits of
reducing or collapsing the information spaces, until the lower limits of
task feasibility are reached.

\begin{figure*}[t]
\begin{center}
\begin{tabular}{ccc}
  \includegraphics[width=0.45\linewidth]{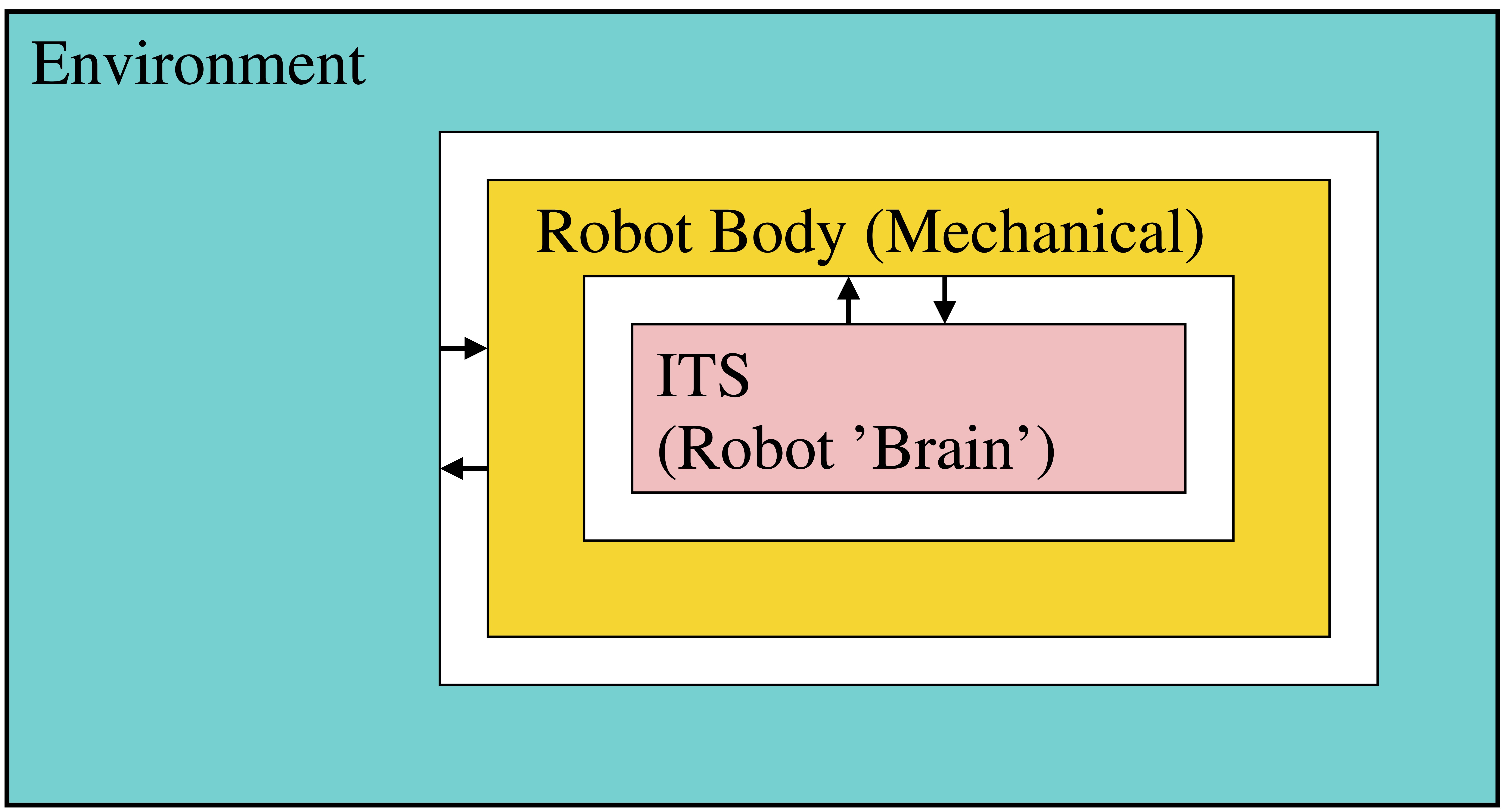} & \hspace*{0.5cm} & \includegraphics[width=0.45\linewidth]{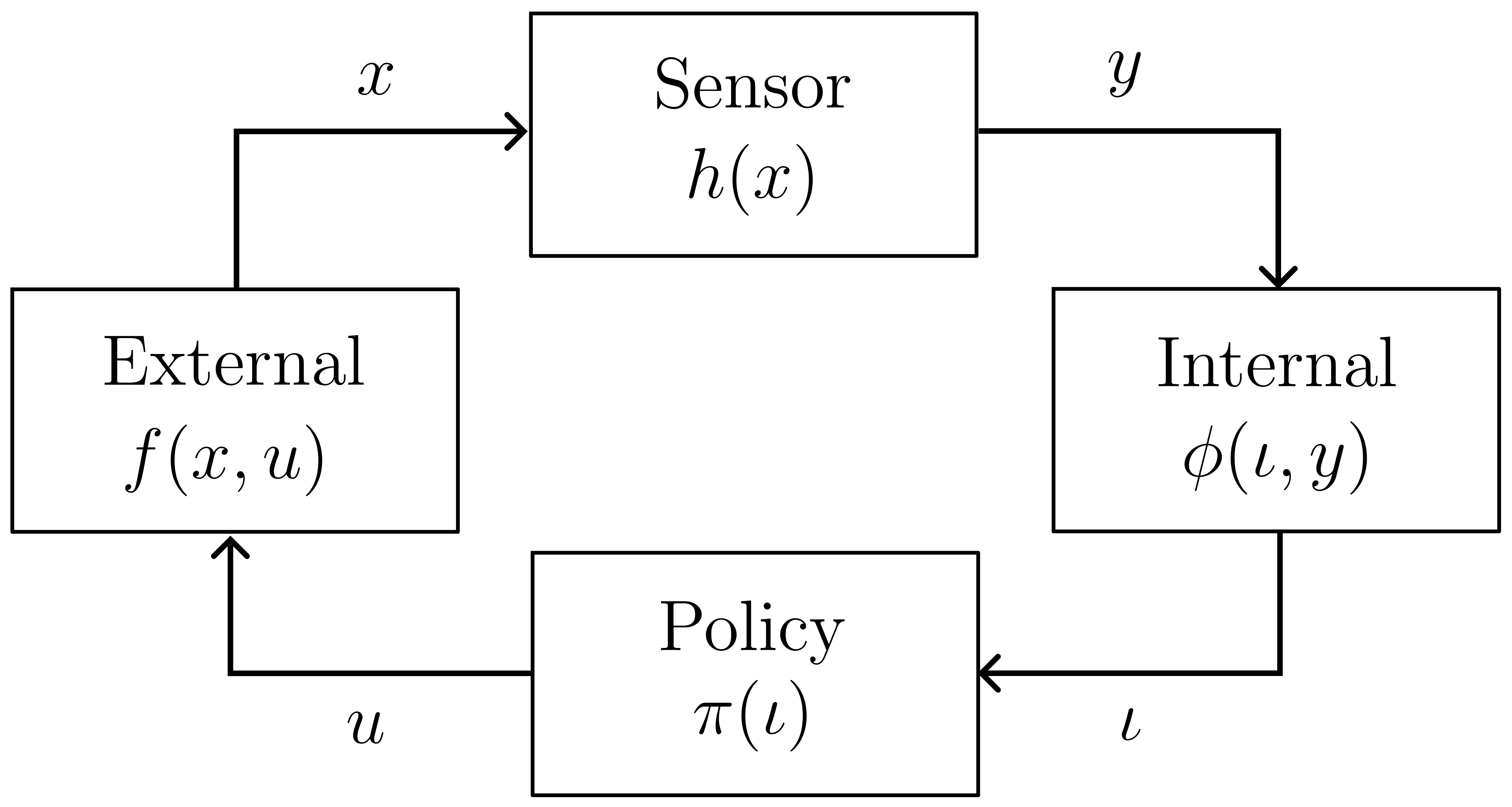} \\
  (a) & & (b)
\end{tabular}
\end{center}
\caption{\label{fig:extint}(a) The {\em internal} robot brain is
  defined as an ITS that interacts with the {\em external} world
  (robot body and environment). (b) Coupled internal and external
  systems mathematically capture sensing, actuation, internal
  computation, and the external world.}
\end{figure*}

\subsection{Previous work}

Many of the concepts in this paper build upon \citep{WeiSakLav22}, in
which we recently proposed an enactivist-oriented model of cognition
based on information spaces. By {\em enactivist} \citep{HutMyi12}, it
is meant that the necessary brain structures emerge from sensorimotor
interaction, and do not necessarily have predetermined, meaningful
components as in the classical representationalist paradigm
\citep{NewSim72}. 
Despite introducing a general framework, the focus
of \citep{WeiSakLav22} was on emerging equivalence classes of the
environment states through sensorimotor interactions. In some sense,
this corresponds to finding a (minimal) sufficient sensor or a
representation for the external system. In this paper, we focus on the internal system which, in turn, refers to finding a minimal filter
and/or a policy, \BS{given an appropriate task description}.

Most approaches in filtering can be categorized into two classes: probabilistic and combinatorial. Probabilistic filters typically rely on Bayes' rule to propagate the obtained information \citep{ThrBurFox05,Sar13}. They have been extensively used in robotics; especially for state estimation, mapping, and localization \citep{DisNewClaDurCso01,ZheZenSob17}. Combinatorial filters \citep{TovCohLav08,KriShe12} do not rely on probabilistic models but instead make use of nondeterministic (possibilistic) ones. The notion of information spaces \citep[Chapter 11]{Lav06} provides a general formalization that encompasses both probabilistic and combinatorial filters. In our framework based on transition systems, the set of states of a transition system will indeed be an information space. Using the notion of information spaces, several works attempted to characterize different sensors defined over the same state space and compare their power in terms of gathered information \citep{OkaLav07b, Lav12b, ZhaShe21}. Despite providing an elegant and exact way of solving filtering problems, the space requirements for a combinatorial filter can be high. Considering a notion of minimality, some authors addressed algorithmic reduction of filters \citep{RahOka21,OkaShe17,SonOka12}. 
Reducing combinatorial filters have roots in the theory of computation, where decomposition \citep{Har60, HarSte64} and minimization of finite automata (Moore and Mealy machines) has been a topic of active research since the 1950s.

A gap still remains between analyzing the requirements of filters or sensors for pure inference (passive filtering) and the ones needed for active tasks (planning and control) such that an \emph{information-feedback} policy can be described. Various representations were used in the literature as a domain to define the policy.
For most robotic planning problems, the domain of the policy or a plan is fixed; which is the robot configuration or the phase space \citep{MajTed17,ZhuAlo19}. This corresponds to the assumption that the robot state can be fully observed or estimated with high accuracy. For problems that the state is not fully observable, \ac{POMDPs} \citep{KaeLitCas98,RosPinPaqCha08} and \emph{belief spaces} \citep{VitTom11,AghChaAma14} have been considered for planning.  Note that POMDP literature is mostly restricted to finite state and action spaces.
There is a limited literature that studied the information requirements for active tasks which corresponds to determining the weakest notion of sensing or filtering that is sufficient to accomplish a task. A notable early work showed, especially for manipulation, that one can achieve certain tasks even in the absence of sensory observations \citep{ErdMas88}. In \citep{ZhaShe20} the authors characterize all possible sensor abstractions that are sufficient to solve a planning problem. Minimality has been addressed for specific problems regarding mobile robot navigation in \citep{BluKoz78,TovMurLav07}.
Closely related to our work, a language-theoretic formulation appears in \citep{SabGhaSheOka19}, in which, Procrustean-graphs (p-graphs) were proposed as an abstraction to reason about interactions between a robot and its environment. 

Obtaining a model from input-output relations that represents the
underlying system has been a common interest for many fields ranging
from control theory, machine learning, and robotics. Different
approaches to this problem in the context of finite state automata
were reviewed by \citep{Pit89}. In diversity-based inference (DBI) for
an input-output machine \citep{Bai77, RivSch94, RivSch93}, a model of
the underlying system is constructed in terms of equivalence classes
of \emph{tests} which consist of one or more consecutive actions and
observations. Its probabilistic counterpart, predictive state
representation (PSR) \citep{LitSut01, BooSidGor11}, addresses the
analogous problem by considering (linear) combinations of prediction
vectors, which represent probabilities for test success/failure. Other than relying solely on input-output relations\new{,} a parametric model (or a class of them) can be provided for learning a representation for the underlying system \citep{BruBerBraLecHasRusGro22}. We will also consider this distinction between model-based and model-free within our ITS framework.

\subsection{Contributions}

The main contribution of this paper is a novel mathematical framework for analyzing and distinguishing the interactions that emerge from a robotic system embedded in an environment. We introduce the notion of ITSs as a general way to characterize the internal system (``brain") of the robot.  We then proceed to establish conditions for sufficiency and the existence of unique minimal ITSs in a very general setting.  Intuitively, we establish how small the robot brain could possibly be for given goals or tasks.  Anything less results in impossibility.  The framework addresses both filtering and sensor fusion problems, which are passive in the sense of no controls are applied, and planning or control problems, which are active.  We illustrate the \newnew{scope} 
of the framework by applying it to several problems that shed light on relationships to many existing concepts, including Kalman filters, predictive state representations, combinatorial filters, and planning over reduced information spaces.

\BS{Many of the concepts in this paper build upon \citep{WeiSakLav22} and \cite[Chapter 11]{Lav06}. The contributions of the present paper with respect to these works are listed in the following:
\begin{itemize}
    \item The framework based on transition systems introduced in \citep{WeiSakLav22} is adapted into a robotic setting. We define the notions of internal and external systems within this context, together with concrete examples. Moreover, we formulate the disturbances affecting the external system model and the sensor mapping, within this framework. 
    \item We formally define the notion of task description (distinguishing between finite and infinitary tasks, as well as between active and passive tasks), and filtering and planning problems.
    \item The notion of minimality for a transition system describing the internal system under a planning (control) task is new. 
    \item The ITSs corresponding to some of the information spaces presented in \cite[Chapter 11]{Lav06}, which were based on intuitions, are shown to be minimal applying the proposed framework.
    \item We also formalize the model-based and the model-free information spaces using the notion of coupled internal-external systems.
\end{itemize}
}
This paper is an expanded version of \citep{SakWeiLav22}. 

\subsection{Paper structure}
The remaining of the paper is organized as follows.
Section \ref{sec:math} provides a general mathematical formulation of
robot-environment interaction as transition systems. We also introduce a notion of couplings which encode various types of interactions between an internal and an external system.
Section~\ref{sec:suff} then develops central notions of sufficiency and minimality over the space of possible ITSs. 
Section~\ref{sec:problems} applies the general concepts to address what it means to solve both passive (filtering) and active (planning/control) tasks minimally. 
Canonical problem families are presented that aim to capture typical problem settings in filtering and planning, from the perspective of minimal sufficient solutions.
Section~\ref{sec:ex} illustrates how the theory can be applied using
simple examples. Section~\ref{sec:discussion} summarizes the contributions and identifies important directions for future work.

\section{Mathematical Models of Robot-Environment Systems}
\label{sec:math}

\subsection{Internal and external systems}
\label{ssec:IntExt}

In this paper, we consider a robot embedded in an environment and
describe this system as two subsystems, named \emph{internal} and
\emph{external}, connected through symmetric input-output
relations.
\newnew{The external system describes the physical world, and the internal system describes the information processing ``robot brain''.}
\BS{With ``robot brain" we refer to a centralized computational component that processes sensor observations and actions.} The interaction between the internal and external systems is shown in Figure~\ref{fig:extint}(b).
The input to the internal system is the information reflecting the state of the external system, obtained through observations (that is the output of the external system).
The output of the internal system is a
control command that in turn corresponds to the control input \newnew{of} 
the external system, and causes its state to change.  In this sense, the
state of the external system is similar to the use of the term \emph{state} in control
theory and the state of the internal system is similar to the use of the term in computer science. 

The external system corresponds to the totality of the physical environment, including the robot body. Let $X$ denote the set of states of this system; it could be for example, \BS{the configuration of a robot (e.g., position and orientation of a mobile robot or joint configuration of a robotic manipulator) within a known environment (or within a set of possible environments), or it can be extended to include also the (higher order) derivatives of its configuration.}
See \citep[\newnew{Section} 3.1]{Lav12b} for possible state spaces of a mobile robot. Next,
let $U$ be the set of control inputs (also referred to as actions).
When applied at a state $x \in X$, a control $u \in U$ causes $x$ to
change according to a state transition function
$f\colon X \times U \to X$. Indeed, an action $u\in U$ refers to the
control input to the system and corresponds to the stimuli created by
a control command generated by a decision maker. Mathematically,
\emph{the external system} can now be expressed as the
triple~$(X,U,f)$. The sets $X$ and $U$ can be finite or infinite
discrete spaces, or they could be equipped with extra structure: they
could be metric manifolds, vector spaces, compact, or non-compact
topological spaces. In such cases, the function $f$ may or may not be
assumed to respect such structure: sometimes it is appropriate to
assume continuity or measurability.

The internal system (robot brain) corresponds to the perspective of
a decision maker. The states of this system correspond to the retained information gathered through the outcomes of actions in terms of sensor observations. To this end, the basis of our mathematical formulation of the internal system is the notion of an
\emph{\ac{Ispace}} presented in \citep[Chapter 11]{Lav06}. Let $\Ifs$ be the set of these internal information states. We will use the term \emph{\ac{Istate}} to refer to elements of $\Ifs$ and denote them by~$\is$. 
Similar to the external system, the internal system evolves with each $y \in Y$ according to the \emph{information transition function} $\phi\colon \Ifs \times Y \rightarrow \Ifs$.  The \emph{internal system} can now mathematically be described by the
triple~$(\Ifs,Y,\phi)$.

The external $(X,U,f)$ and internal $(\Ifs,Y,\phi)$ can be
\emph{coupled} to each other to create a \emph{coupled dynamical system}. This is achieved by introducing two coupling functions,
that match outputs of one system to the inputs of the other and vice
versa. For us, these are the \emph{sensor mapping} $h\colon X\to Y$
and the \emph{policy} $\pi\colon \I\to U$, see
Figure~\ref{fig:extint}. The function $h$ labels the external states
with sensory data, and $\pi$ labels the internal information states
with the actions.
Note that $\pi$ can be seen as an information feedback policy sending a control to the external. 
\BS{In the following parts of this paper, we will refer to $\pi$ simply as the policy. Therefore, it will be a map from the states of an \ac{Ispace} (Sections~\ref{sec:hist_ispace} and \ref{sec:derived_ITS} will present possible \ac{Ispace} descriptions) to the set of controls. This definition is more general than the use of policy in the robotics literature which typically refers to a state-feedback policy, that is, a map from the states of a deterministic or a probabilistic description of the external system.}

\BS{Suppose the system evolves in discrete stages.} 
\BS{Then}, the coupled dynamical system can be written as
\begin{align}
  \centering
  \is'&=\phi(\is,y) &\text{in which } y & = h(x), \nonumber \\
  x'  &=f(x, u)     &\text{in which } u & = \pi(\is'). \label{eqn:coupled_sys}
\end{align}
Here we use $x'$ to refer to the next state, not the derivative
of~$x$. Whereas the equations on the left side describe the evolution
of this coupled system, the ones on the right show the respective
outputs of each subsystem. The coupled system of internal and external
described this way is an autonomous system, meaning that given an
initial state $(\is, x) \in \Ifs \times X$ there exists a unique
state trajectory.%
\endnote{In dynamical systems terminology, the coupled system is a closed, deterministic dynamical system. In the presence of extrinsic disturbances (see Section~\ref{ssec:Disturbances}) the coupled system becomes a non-autonomous dynamical system.}
We denote the function $(\is,x)\mapsto (\is',x')$ by
$\phi*_{\pi,h}f$ which highlights that $\phi*_{\pi,h}f$ is a
coupling of $\phi$ and $f$ via
the pair of coupling functions
$(\pi,h)$. Then, the coupled system is the pair
$$(\Ifs\times X,\,\phi*_{\pi,h}f).$$

For the external
system, starting from an initial state $x_1$, each stage $k$
corresponds to applying an action $u_{k}$ which then yields the next
stage $k+1$ and the next state $x_{k+1}=f(x_{k},u_{k})$.  As the
system evolves through stages, the tuples $\histx_k=(x_1,x_2,\dots,x_k)$,
$\histu_{k-1}=(u_1,u_2,\dots,u_{k-1})$ and
$\histy_k=(y_1,y_2,\dots, y_k)$ correspond respectively to the state, action, and observation
histories up to stage $k$, with $y_i=h(x_i)$ for
$i\in\{1,\dots,k\}$.
Note that applying the action
$u_k$ at stage $k$ would result in a transition to state $x_{k+1}$ and
the corresponding sensor reading $y_{k+1}=h(x_{k+1})$.  The same
applies for the internal system. We can describe its evolution
starting from an initial \ac{Istate} $\is_0$, and following the state
transition equation $\is_{k}=\phi(\is_{k-1},y_k)$. At stage $k$,
$\pi(\is_k)$ produces the action $u_k$. Note that the stage index of
the \ac{Istate} starts from~$0$. In some cases, $\is_0$ can encode prior information regarding the external system and in others, it does not. 
We will consider this distinction more formally in Section~\ref{sec:hist_ispace}.  The next
information state $\is_1$ is obtained using $\is_0$
and~$y_1$. 
\BS{We assume that no control command (action) is outputted at stage $0$, meaning that the control history starts with $u_1$. By convention, $\histu_0=()$ is an empty sequence.}

\begin{figure}
  \centering
  \includegraphics[width=.9\linewidth]{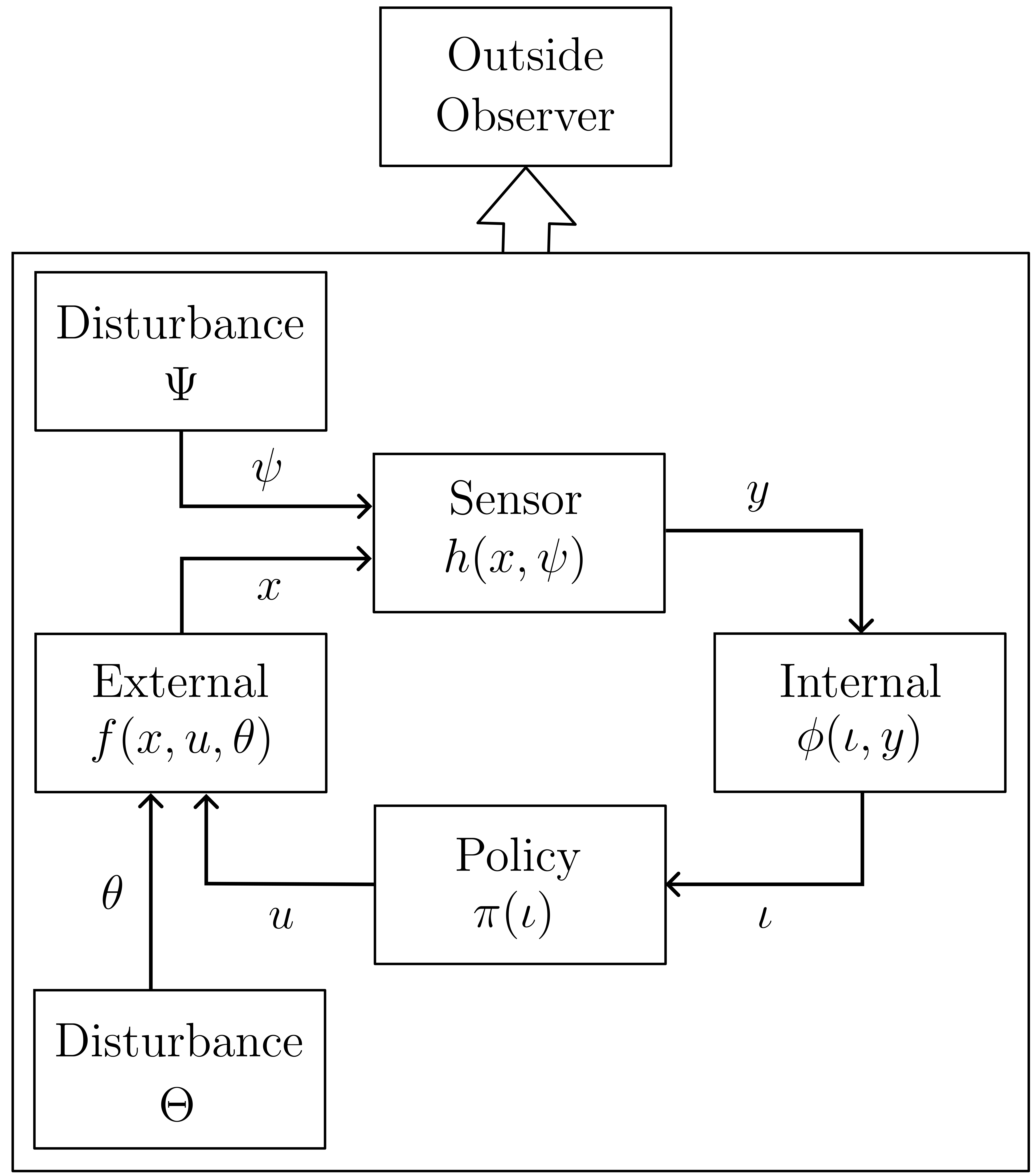} 
  \caption{\label{fig:extint_dist} Disturbances may affect the external system and the sensor. Note that conditioned on the realization of these, the internal system and policy remain deterministic. An outside observer (planner / designer) may perceive the coupled system as a whole.}
\end{figure}

\subsection{Disturbances} \label{ssec:Disturbances}

The coupled internal-external systems formulation can be extended to
include disturbances affecting the external system and the sensor. In
particular, we can define two disturbance generating systems, with
outputs $\theta \in \Theta$ and $\psi \in \Psi$, that are influencing
the external system and the sensor, respectively (see
Figure~\ref{fig:extint_dist}). Mathematically, the external system
with disturbances is $(X,U\times \Theta,f)$, where
$f\colon X\times (U \times \Theta) \to X$ is a state transition function
for the external system under disturbances. Thus, the disturbances
merely add a new dimension to the control parameters of the system.
In the internal-external coupling, we also assume disturbances in the sensory mapping which
takes the form $h\colon X\times \Psi\to\Ifs$. Then, the definition of the
coupled internal-external system given in~\eqref{eqn:coupled_sys} is
modified as follows:
\begin{align}
  \centering
  \is'&=\phi(\is,y) &\text{in which } y &= h(x, \psi), \nonumber \\
  x'&=f(x, u, \theta) &\text{in which } u &= \pi(\is') .\label{eqn:coupled_sys_dist}
\end{align}
Here, the other two functions $\phi$ (the information transition
function) and $\pi$ (policy) are as in Section~\ref{ssec:IntExt}.

Note that neither $\theta$, which affects the state transition function
of the external system, nor $\psi$, which affects the sensor mapping, is
directly available to the internal system $(\Ifs,Y,\phi)$, which is
just as in Section~\ref{ssec:IntExt}.  However, some information
regarding the disturbances can be specified for an internal system that
makes use of a model of the external. \newnew{These} could be encoded into
the set $\Ifs$ and the transition function $\phi$. 
Then, the internal system has a
non-trivial correlation with the disturbance, even though it is never
directly perceived.
We will consider the distinction between model-based and model-free systems in Section~\ref{Section_Model_based_model_free}. Finally, the coupled system is mathematically a triple $(\Ifs \times X,\Theta\times\Psi,g)$ where $g$ is a function that, given a state $(\iota, x) \in \Ifs \times X$ and a disturbance parameter $(\theta,\psi)\in\Theta\times\Psi$, outputs the next state in $ \Ifs \times X$, formally, $$g\colon (\Ifs \times X ) \times (\Theta \times \Psi) \to \Ifs \times X.$$

There are two possibilities for how the information
regarding the disturbances can be specified: nondeterministic and
probabilistic. In the nondeterministic case, the set $\Psi$ and
possibly a subset 
$\Psi(x)\subseteq \Psi$ is specified for all $x\in X$, in which
$\Psi(x)$ represents the set of all $\psi$ that can be realized for
each $x$. In the probabilistic case, assuming that the disturbances
are generated by a system that is Markovian, such that they do not
depend on the previous stages, a probability distribution over $\Psi$
can be specified for each $x$. This will be denoted as
$P(\psi \mid x)$.  The disturbances affecting the external system can
be specified similarly to those that affect the sensor. In the
nondeterministic case, the set $\Theta(x,u) \in \Theta$ is known for
each $(x,u) \in X\times U$. In the probabilistic case, a probability
distribution over $\Theta$, that is, $P(\theta \mid x,u)$, can be
specified for each $(x,u)$.

\subsection{Generalizing to transition systems}\label{sec:trans_sys}

A \emph{transition system} is a triple $(S, \Lambda, T)$, in which, $S$ and
$\Lambda$ are some sets (possibly equipped with some structure, for example,
topology), and $T \subseteq S\times\Lambda\times S$ is a ternary relation.
Here $S$ is the set of states, $\Lambda$ is the set of labels for
transitions between elements of $S$, and a triple $(s,\lambda,s')$ belongs to $T$ if there is a transition from $s$ to $s'$ labeled by~$\lambda$. A special case is when for each $(s,\lambda)\in S\times \Lambda$ there is a unique $s'\in S$ with $(s,\lambda,s')\in T$. Then, $T$ defines a function $\tau\colon S\times\Lambda\to S$.  These are called \emph{deterministic transition systems}, and sometimes also \emph{(open) dynamical systems}. An extensive analysis of those and their coupling is explored by~\citep{Spi15}.

All models in Sections \ref{ssec:IntExt} and~\ref{ssec:Disturbances} are
deterministic transition systems: the external, the internal, the
disturbed versions, and their couplings are all deterministic
transition systems.\endnote{Although the values of the disturbances, when present, are not known beforehand or deterministically, once the disturbed parts of the coupled system receive the value of the disturbance, they produce a uniquely determined output. Hence, these components and their couplings are deterministic transition systems.} Note that if $\Lambda$ is a singleton, the system
$(S,\Lambda,\tau)$ is equivalent to a discrete time autonomous dynamical
system.
If $(S,\Lambda,\tau)$ is a deterministic transition system, 
\BS{$S$ and $\Lambda$ are finite,}
$s_0\in S$, and $F\subseteq S$, then $(S,\Lambda,\tau,s_0,F)$
is a finite automaton as defined in \cite[Definition~1.5]{Sip12}. If not stated otherwise, we do not assume our systems to be finite. In Section~\ref{ssec:DBI} we explore connections between our theory with the theory of finite systems.

The following notion of state-relabeled transition systems was
introduced in~\citep{WeiSakLav22} to model the internal and external systems.

\begin{definition}[\textbf{State-relabeled transition system}] 
  A state-relabeled transition system is the quintuple
  $(S,\Lambda,T,\sigma,L)$ in which $\sigma\colon S \rightarrow L$ is a
  labeling function and $(S,\Lambda,T)$ is a transition system.
  The function $\sigma$ is the \emph{labeling function} and
  $L$ is the set of labels.
\end{definition}

A state-relabeled transition system is closely related to
the \emph{Moore machine} which is a state-relabeled transition system with a fixed initial state, \BS{a finite set of states, and finite sets of input and output alphabets (finite $\Lambda$ and $L$)~\citep{Moore56}.}

\new{In our framework, a labeling function $\sigma$ serves two purposes; it enables a potential coupling by matching output of one system to the input of another, and acts as a categorization of the states of the system being labeled.}
Preimages of a labeling function $\sigma$ induce a partition of the
state space $S$ into sets whose elements are indistinguishable through
sensing. Let $S/\sigma$ be the set of equivalence classes $[s]_\sigma$
induced by $\sigma$ such that $S/\sigma=\{[s]_\sigma \mid s\in S\}$
and $[s]_\sigma=\{s'\in S \mid \sigma(s')=\sigma(s)\}$.  Then, using
these equivalence classes, we can define a new transition system called
the \emph{quotient} of $(S,\Lambda,T)$ by~$\sigma$.

\begin{definition}[\textbf{Quotient system}] \label{Def_Quotient_of_TS}
  The quotient of $(S,\Lambda,T)$ by $\sigma$ is the transition system
  $(S/\sigma,\Lambda,T/\sigma)$, in which
  \begin{equation*}
    T/\sigma :=\big\{\big([s]_\sigma, \lambda, [s']_\sigma\big) \mid (s, \lambda, s') \in T \big\}.
  \end{equation*}
\end{definition}

Note that $(S/\sigma,\Lambda,T/\sigma)$ is a reduced version of
$(S,\Lambda,T)$, in the sense that the map $s\mapsto [s]_\sigma$ is
onto, but not necessarily one-to-one.%
\endnote{Considering homomorphisms between transition systems goes
  beyond the present paper; a good source for the general theory
  is~\citep{GORANKO2007249}.}
We might be interested in finding a labeling function $\sigma$ such
that the corresponding quotient transition system is as simple as
possible while ensuring that it is still useful. In the following
sections, we will provide motivations for a reduction and discuss in
more detail the requirements on $\sigma$ for the quotient system to be useful.

The external and internal systems can be written as 
state-relabeled deterministic transition systems $(X,U,f,h,Y)$ and
$(\Ifs,Y,\phi,\pi,U)$, respectively, in which $h$ and $\pi$ are
considered as labeling functions.
Interpreting the labels as the output of a transition system, a coupled internal-external system can be described in terms of the
state-relabeled transition systems formulation too, so that the output of one transition system is an input for another.  Described this way, a coupling of two transition systems results in unique paths in either transition system, initialized at a particular state.

\section{Sufficient Information Transition Systems}\label{sec:suff}

\subsection{Information transition systems}\label{sec:ITS_prespectives}

In the general setting, an \ac{Istate} corresponds to the available
(stored) information at a certain stage with respect to the action and observation histories.  An \ac{Ispace} is a collection of all possible \ac{Istate}s.  We will use the acronym ITS to refer to an \emph{information transition system}, that is, a transition system whose state space is an \ac{Ispace}.

We have already used the notion of an \ac{Ispace} when modeling the
internal system representing the robot brain, which we view as an \ac{ITS}.
Here, we extend the notion of an \ac{ITS} to include different perspectives from which the external and the coupled systems
can be viewed. In particular, we identify three perspectives;
\begin{itemize}
\item a planner,
\item a plan executor, 
\item and an (independent) observer.
\end{itemize}
With a slight abuse of previously introduced notation and terminology, we use the term \emph{internal} to refer to any system that is not
the external system and we use $\Ifs$ to denote a generic \ac{Ispace}.  We use the term \ac{DITS} to refer to an \ac{ITS} for which the transitions are governed by an information transition function so that they are deterministic.  We denote these types of systems \newnew{by} 
$(\Ifs, \Lambda, \phi)$, in which $\Lambda$ is the edge \newnew{labeling} 
and $\phi\colon\Ifs \times \Lambda \to \Ifs$ is an information transition function. Otherwise, an \ac{ITS} will be called a \ac{NITS} \newnew{and denoted by} 
\new{$(\Ifs, \Lambda, \Phi)$}, 
in which $\Phi \subseteq \Ifs \times \Lambda \times \Ifs$ is the transition relation.

Suppose $(\Ifs,U\times Y,\Phi)$ is a \ac{NITS} and $\pi\colon\I\to U$ a policy, \new{and define}
\begin{equation}\label{eq:restNITS}
  \Phi_\pi := \{ (\is, (u,y), \is') \in \Phi\mid u=\pi(\is) \} \, \new{\subseteq \Phi}.
\end{equation}
\new{The transition system $(\Ifs,U\times Y,\Phi_\pi)$ is called the \emph{restriction} of $(\Ifs,U\times Y,\Phi)$ by the policy~$\pi$.}
\endnote{Note that we can treat the external system symmetrically and given a non-deterministic transition system $(X,Y\times U,F)$ with $F\subseteq X\times (Y\times U) \times X$ and a sensor mapping $h\colon X\to Y$, we can define the restriction $F_h$ in an analogous way as above.  However, because it is not central to this paper we will not elaborate on this topic.}
If $(\Ifs,U\times Y,\phi)$ is a \ac{DITS}, the \emph{strong restriction} by $\pi\colon \Ifs\to U$ is given by $(\Ifs,Y,\phi_\pi)$,
in which $\phi_\pi\colon \Ifs\times Y\to\Ifs$ and 
$\phi_\pi(\is,y)=\phi(\is,\pi(\is),y)$. The strong restriction is obtained by first taking the restriction of $\phi$ treated as a subset of $\Ifs\times (U\times Y) \times\Ifs$ and then taking the projection of the resulting set onto \mbox{$\Ifs\times Y\times\Ifs$}.

Before any policy is fixed, a \ac{DITS} of the form $(\Ifs,U\times Y,\phi)$ corresponds to the \emph{planner} perspective. Once the policy is fixed, the strong restriction $(\Ifs,Y,\phi_\pi)$, which is just as the internal system was defined in Section~\ref{sec:math}, corresponds to the \emph{plan executor}.

\begin{example}[\textbf{\emph{A binary toy model}}]
\new{
Consider the DITS $(\ifs, U\times Y, \phi)$ which corresponds to a planner perspective. Suppose $U=Y=\Ifs=\{0,1\}$ and let $\phi\colon \Ifs\times (U\times Y) \to \Ifs$ be defined by $\phi(\is,(u,y))=|y-u|$. Suppose a policy $\pi: \ifs \rightarrow U$ is fixed such that $\pi(\is)=\is$. Then, $(\ifs, Y, \phi_\pi)$, in which $\phi_\pi(\is, y)=|y-\pi(\is)|$, is the strong restriction of $(\ifs, U\times Y, \phi)$ by $\pi$. Furthermore, it corresponds to the plan executor.}
\end{example}

In this paper, an observation will refer to a sensor reading~$y$.  However, when we discuss an (independent) observer described over the coupled system, the input to this observer system can be a function of any variable of the coupled system, for instance action, information state or the state of the external. If the coupled system is disturbed, the disturbances can be observed by the observer too.

\subsection{History information spaces}\label{sec:hist_ispace}

\newnew{The most fundamental \ac{Ispace} is the \emph{history \ac{Ispace}}, which we denote by $\Ifshist$. A \emph{history \ac{Istate}} at stage $k$ corresponds to all the information that is gathered through sensing (and potentially also through actions) up to stage $k$, assuming perfect memory. In this sense, $\Ifshist$ is the canonical \ac{Ispace}, and all the other \ac{Ispace}s are derived from it.  We denote the history \ac{Istate}s by the letter $\his$ to distinguish them from the states of other information spaces, which 
we typically denote by $\is$ (recall the notation introduced in Section~\ref{ssec:IntExt}).}

Let $U$ and $Y$ be the sets of possible actions and observations respectively.  The elements of $\Ifshist$ are finite sequences of alternating actions and observations which build upon some initial state $\eta_0\in \Ifshist$.
Denote the set of possible initial states of $\Ifshist$ by $\Ifs_0$. Then the elements of $\Ifshist$ are of the form 
\begin{equation}
  \label{eq:SteveNotation1}
  (\his_0,\histu_{k-1}, \histy_k):=(\his_0,y_1,u_1,y_2,u_2\dots,u_{k-1},y_k)
\end{equation}
for $k\in\N$, in which $\his_0\in \Ifs_0$, $u_i\in U$ and $y_i\in Y$ for all $i\le k$. Additionally, denote
\begin{equation}
  \label{eq:SteveNotation2}
  \his_k:=(\his_0,\histu_{k-1},\histy_k).
\end{equation}
The notations \eqref{eq:SteveNotation1} and \eqref{eq:SteveNotation2} follow \citep[Chapter 11]{Lav06}. The lower index $k$ refers to the
\emph{stage} of the state, or the length of the action-observation sequence.  The convention here, as already mentioned in the end of Section~\ref{ssec:IntExt}, is that $\histu_0$ is assumed to be the null-tuple. Thus, $\his_k=(\his_0, \histu_{k-1}, \histy_k)$ is the \ac{Istate} at stage $k$, which is achieved by \BS{iteratively concatenating the action-observation pairs $(u_{i-1},y_{i})$ at the end of the sequence for $i\in \{1,\dots,k\}$ after the initial state~$\his_0$.}

The description of initial conditions in the set $\Ifs_0$ varies with the available prior information. We discuss these descriptions
below. The history information space at stage $k$ is the subset of $\Ifshist$ which consists of elements of the form given by
\eqref{eq:SteveNotation1} for fixed $k$, and can be expressed as the product
\begin{equation} \label{Eq_Def_finite_Ifshist}
\Ifs_k := \Ifs_0 \times \histU_{k-1} \times \histY_k,
\end{equation}
in which
$\histU_{k-1}=U^{k-1}$ and $\histY_k=Y^{k}$. In general, the number of stages that the system will go through is not fixed. Therefore we
assume the history \ac{Ispace} to contain all finite action-observation sequences, that is, $\Ifshist= \bigcup_{k \in \mathbb{N}}\Ifs_k$.  The \ac{DITS} corresponding to $\Ifshist$ is $(\Ifshist, U \times Y, \fmaphist)$, in which 
$$\fmaphist(\his,u,y)=\his\cat (u, y),$$ 
and $\cat$ is the concatenation of two sequences.  Note that the concatenation operation makes $(\Ifshist,\cat)$ into a free monoid.
The derived information transitions systems which will be introduced in Section~\ref{sec:derived_ITS} can be seen as quotients of this monoid by equivalence relations; sometimes these quotients can also be monoids, or even groups.

\subsection{Sufficient state-relabeling}
\label{ssec:suff}
In \citep{WeiSakLav22} we have introduced a notion of \emph{sufficiency} that generalizes the definition introduced in \citep[Chapter 11]{Lav06} and is presented here for completeness.

\begin{definition}[\textbf{Sufficient labeling function}]
\label{def:sufficiency}
Let $(S,\Lambda,T)$ be a transition system. A labeling function $\sigma\colon S \rightarrow L$ defined over the states of a transition system is \emph{sufficient} if and only if for all $s,t,s',t' \in S$ and all $\lambda \in \Lambda$, the following implication holds:
  \begin{multline*}
    \sigma(s)=\sigma(t) \land (s,\lambda,s')\in T \land (t,\lambda,t')\in T \implies \\
    \sigma(s')=\sigma(t'). 
  \end{multline*}
If $\sigma$ is defined over the states of \new{a deterministic transition system}
$(S,\Lambda,\tau)$, then $\sigma$ is sufficient if and only if for all $s,t\in S$ and all $\lambda \in \Lambda$, $\sigma(s)=\sigma(t)$ implies that $\sigma(\tau(s,\lambda))=\sigma(\tau(t,\lambda))$.
\end{definition}

Consider the stage-based evolution of
the state-relabeled deterministic transition system corresponding to the
external system $(X,U,f,h,Y)$ with respect to the action (control
input) sequence $\histu_{k-1}=(u_1,\dots,u_{k-1})$. This corresponds
to the state and observation histories till stage $k$, that are
$\histx_k=(x_1,\dots,x_k)$ and $\histy_k=(y_1,\dots,y_k)$. Recall that
applying $u_k$ at stage $k$ would result in a transition to $x_{k+1}$
and the corresponding observation $y_{k+1}=h(x_{k+1})$.
Hence, in this context, sufficiency of $h$ implies that given the
label $y_k=h(x_{k})$ and the action $u_k$, it is possible to determine
the label $y_{k+1}=h(x_{k+1})$. One interpretation of sufficiency of
$h$ is that the respective quotient system 
sufficiently represents the underlying system up to the equivalence
classes induced by~$h$. This notion is similar to a minimal realization
of a system, that is, the minimal state space description that models
the given input-output measurements (see for example
\citep{KotMooTon18}). \BS{Another} interpretation is in a predictive sense.
Suppose the quotient system is known. Then, the label
$y_{k+1}=h(x_{k+1})$ can be determined before the system gets to
$x_{k+1}$, using the current label $y_k$ and the action to be applied
$u_k$.  Furthermore, under a fixed policy, the complete
observation trajectory can be determined from the initial observation
by induction.

Now, consider an internal system with a labeling function
$\imap\colon \Ifs \rightarrow \Ifs'$, that is,
$(\Ifs,U\times Y,\phi,\kappa,\Ifs')$, and its evolution with respect
to the histories $\tilde{y}=(y_1,\dots,y_{k})$ and
$\tilde{u}=(u_1,\dots,u_{k-1})$. At stage $k$, the state of the
\ac{DITS} is $\is_{k}$ and with $(u_k,y_{k+1})$ the system transitions
to $\is_{k+1} = \phi(\is_k, u_k, y_{k+1})$. Sufficiency of $\imap$
implies that given $\imap(\is_k)$, $u_k$, and $y_{k+1}$, we can
determine $\imap(\is_{k+1})$. This is equivalent to the definition
introduced in \citep[Chapter 11]{Lav06} and makes it a special case of
Definition~\ref{def:sufficiency}.

\subsection{Derived information transition systems}\label{sec:derived_ITS} 

Even though it seems natural to rely on a history \ac{ITS}, the dimension of a history \ac{Istate} increases linearly, and the size of the history \ac{Ispace} increases exponentially, as a function of the stage index, making it impractical in most cases. Thus, we are interested in defining a reduced \ac{ITS} that is more manageable, due to for example lowered requirements for memory or computing power. Furthermore, this would largely simplify the description of a policy for a planner or a plan executor.

Recall the quotient of a transition system by a labeling function (see Definition~\ref{Def_Quotient_of_TS}).
We rewrite $(\Ifshist, U \times Y, \phi_{hist})$ as $(\Ifshist, U \times Y, \Phi_{hist})$, in which
\begin{multline} \label{eq:Set_Phi}
  \Phi_{hist}= \Big\{ \big(\his, (u,y), \his' \big) \in\\
  \Ifshist \times (U \times Y) \times \Ifshist \mid \his' = \phi_{hist}(\his,u,y) \Big\}.   
\end{multline}
We can introduce an \ac{Imap} $\imap\colon \Ifshist \rightarrow \Ifsder$ that categorizes the states of $\Ifshist$ into equivalence classes
through its preimages.  In this case, $\imap$ serves as a labeling function. A reduction is obtained in terms of the quotient of $(\Ifshist, U \times Y, \Phi)$ by $\imap$, that is, $(\Ifshist/\imap, U \times Y, \Phi/\imap)$ as histories are grouped into equivalence classes.

It is crucial that the derived \ac{ITS} is a \ac{DITS} so that the transition from the current label to the next can be determined using only the derived \ac{ITS}, without making reference to the history~\ac{ITS}. The reason for this requirement is straightforward for an observer as the \ac{Istate}s correspond to what is inferred about the external system, given observation history (potentially accompanied
by the action history). The same applies for the planner and the plan executor to be able to describe and execute a policy. Considering the quotient system derived by $\imap$ from the \ac{DITS} (by definition) $(\Ifshist, U \times Y, \phi)$, we cannot always guarantee that the resulting \ac{ITS} is deterministic. This depends on the \ac{Imap} used for state-relabeling, as illustrated in the following proposition.

\begin{proposition}
[\textbf{Quotient of a history ITS may be a NITS}]
  \label{prop:non_sufficient_label}
  For all non-empty $U$ and $Y$, and for the corresponding $\Ifshist$,
  there exists a labeling function $\imap$ such that the quotient $(\Ifshist/\imap, U \times Y, \Phi/\imap)$ of
  $(\Ifshist, U \times Y, \phi)$ by $\kappa$,
  in which $\Phi$ is defined as in \eqref{eq:Set_Phi},
  is not a \ac{DITS}. 
\end{proposition}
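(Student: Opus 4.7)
The plan is to exhibit a concrete labeling function $\kappa$ whose equivalence classes violate the sufficiency property of Definition~\ref{def:sufficiency}, so that the quotient fails to be deterministic.

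First I would fix any $u\in U$ and $y\in Y$ (both non-empty by hypothesis) and any initial state $\his_0 \in \Ifs_0$, and then single out the three history states $\his_0$, $\his_1 := \his_0 \cat (u,y)$, and $\his_2 := \his_1 \cat (u,y)$. Since $(\Ifshist, \cat)$ is a free monoid (as observed at the end of Section~\ref{sec:hist_ispace}), concatenating with the non-empty string $(u,y)$ strictly increases word length, so $\his_0,\his_1,\his_2$ are pairwise distinct.

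Next I would define $\kappa \colon \Ifshist \to \{a, b, c\}$ by setting $\kappa(\his_0) = \kappa(\his_1) = a$, $\kappa(\his_2) = b$, and $\kappa(\his) = c$ for every other $\his \in \Ifshist$. Because $\his_0$ and $\his_1$ share the label $a$, they lie in the same equivalence class $[\his_0]_\kappa = [\his_1]_\kappa$. However, applying $\phi_{hist}(\cdot, u, y)$ to these two representatives produces $\his_1$ (in the class $[\his_0]_\kappa$) and $\his_2$ (in the distinct class $[\his_2]_\kappa$), respectively. Hence $\Phi/\kappa$ contains both triples $\bigl([\his_0]_\kappa, (u,y), [\his_0]_\kappa\bigr)$ and $\bigl([\his_0]_\kappa, (u,y), [\his_2]_\kappa\bigr)$ with the same source and label but different targets. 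Thus $\Phi/\kappa$ is not the graph of a function, and the quotient is a NITS rather than a DITS.

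There is no substantive obstacle; the only subtlety is ensuring that $\his_0,\his_1,\his_2$ are genuinely distinct elements of $\Ifshist$, which is secured by the free monoid structure. Conceptually the construction exploits the fact that $\kappa$ has been arranged to fail sufficiency on the pair $(\his_0,\his_1)$ under the input $(u,y)$, and sufficiency is precisely the property required to promote $\Phi/\kappa$ to a deterministic transition function.
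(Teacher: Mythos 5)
Your proposal is correct and follows essentially the same approach as the paper: both exhibit an explicit non-sufficient labeling that merges two consecutive histories $\his$ and $\his\cat(u,y)$ whose successors under the same pair $(u,y)$ fall into different classes, yielding two quotient triples with the same source class and label but distinct targets. Your three-label construction on the explicit chain $\his_0,\his_1,\his_2$ is if anything a slightly cleaner instance of the paper's two-label construction (which uses histories of repeated actions with a length threshold), but the underlying argument is identical.
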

\begin{proof}
 
\newnew{Let $\imap\colon \Ifshist \rightarrow \{l_1, l_2\}$ and define $\imap^{-1}(l_1) = \big\{\his_{k}=(\histu_{k-1}, \histy_k) \in \Ifshist \mid \histu_{k-1}=(u_i)_{i=1}^{k-1}, \, u_i=u \,\, \mathrm{for} \,\, 1 \leq i \leq k-1, \, \textrm{and} \, k > 3 \big\}$, and $\imap^{-1}(l_2)=\Ifshist \setminus \imap^{-1}(l_1)$. Then $\imap^{-1}(l_1)$ is the set of histories of length $k>3$ which correspond to applying the same action $u$ for $k-1$ times, and $\imap^{-1}(l_2)$ is its complement.}
Then, there
  exist sequences $\his_{k-2}=(\histu_{k-3},\histy_{k-2})$ and
  $\his_{k-1}=(\histu_{k-2},\histy_{k-1})$ such that
  $\his_{k-2}=\his_{k-1}\cat (u,y)$ and
  $\his_{k}=\his_{k-1}\cat (u,y)$ for which
  $\imap(\his_{k-2})=\imap(\his_{k-1})=l_2$ and
  $\imap(\his_k)=l_1$. Thus,
  \[
    \big\{([\his_{k-2}]_\imap, (u,y), [\his_{k-1}]_\imap), ([\his_{k-1}]_\imap, (u,y), [\his_{k}]_\imap) \big\} \in \Phi/\imap.
  \]
  Since $[\his_{k-2}]_\imap=[\his_{k-1}]_\imap$ and
  $[\his_{k-1}]_\imap \neq [\his_{k}]_\imap$, the transition
  corresponding to $([\his_{k-1}]_\imap, (u,y))$ is not unique; thus,
  $(\Ifshist/\imap, U \times Y, \Phi/\imap)$ is not deterministic.\qed
\end{proof}

Note that Proposition~\ref{prop:non_sufficient_label} holds also in
the case of a generic \ac{ITS} $(\Ifs, U \times Y, \phi)$, with
non-history \ac{Istate}s, if there exist $s,s',q,q' \in \Ifs$ such that
$\{(s,(u,y),s'), (q,(u,y),q')\} \in \Phi$, in which $\Phi$ is defined
using $\phi$ as in \eqref{eq:Set_Phi}. Then, any \ac{Imap} $\imap$
such that $\imap(s) = \imap(q)$ and $\imap(s') \neq \imap(q')$ results
in a quotient system that is not a \ac{DITS}.

\begin{remark}\label{rem:SuffDITS}
Whether the quotient system derived from $(\Ifshist, U \times Y, \phi)$
  is a \ac{DITS} depends on the sufficiency of $\imap$. In
  \citep[Proposition 4.5]{WeiSakLav22} it is shown that the quotient of
  a transition system $(S,\Lambda,T)$ by a labeling function $\sigma$ is
  a deterministic transition system
  if and only if $(S,\Lambda,T)$ is full%
  \endnote{A transition system
    $(S,\Lambda,T)$ is full, if $\forall s\in S, \lambda \in \Lambda$
    there exists at least one $s'\in S$ with $(s,\lambda,s')\in T$.}
  and $\sigma$ is sufficient.  
\end{remark}

As $\phi_{hist}$ is a function with
domain $\Ifshist \times (U\times Y)$, it is full, so the following
is implied by \citep{WeiSakLav22} as a special case:


\begin{proposition}[\textbf{A Quotient system is a DITS when the labeling is sufficient}] \label{prop:suff_label}
  Let $(\Ifshist/\imap, U \times Y, \Phi_{hist}/\imap)$ be the
  quotient of $(\Ifshist, U \times Y, \phi_{hist})$ by $\imap$, in
  which $\Phi$ is defined as in~\eqref{eq:Set_Phi}.
  Then $(\Ifshist/\imap, U \times Y, \Phi/\imap)$ is a \ac{DITS} if
  and only if $\imap$ is sufficient.
\end{proposition}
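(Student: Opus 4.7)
The plan is to deduce this proposition as a direct corollary of \citep[Proposition 4.5]{WeiSakLav22}, which is recalled in Remark~\ref{rem:SuffDITS}: for any transition system $(S,\Lambda,T)$ and any labeling function $\sigma\colon S\to L$, the quotient $(S/\sigma,\Lambda,T/\sigma)$ is a \ac{DITS} if and only if $(S,\Lambda,T)$ is full and $\sigma$ is sufficient. Applying this with $S=\Ifshist$, $\Lambda=U\times Y$, $T=\Phi_{hist}$, and $\sigma=\imap$, the only thing left to check is that $(\Ifshist,U\times Y,\Phi_{hist})$ is full. But this is immediate from \eqref{eq:Set_Phi}: $\Phi_{hist}$ is by construction the graph of the total function $\phi_{hist}\colon\Ifshist\times(U\times Y)\to\Ifshist$ given in Section~\ref{sec:hist_ispace} by $\phi_{hist}(\his,u,y)=\his\cat(u,y)$, so every pair $(\his,(u,y))$ has (exactly) one successor in $\Phi_{hist}$. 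The cited equivalence then yields precisely the biconditional claimed.

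For a self-contained write-up, both implications can also be verified in a few lines directly. For the ``if'' direction, assume $\imap$ is sufficient and suppose $([\his_1]_\imap,(u,y),[\his_1']_\imap)$ and $([\his_1]_\imap,(u,y),[\his_2']_\imap)$ both lie in $\Phi_{hist}/\imap$. By the definition of the quotient and \eqref{eq:Set_Phi}, one may choose representatives with $\imap(\his_1)=\imap(\his_2)$ and $\his_i'=\phi_{hist}(\his_i,u,y)$ for $i=1,2$. Sufficiency of $\imap$ applied to the \ac{DITS} $(\Ifshist,U\times Y,\phi_{hist})$ then yields $\imap(\his_1')=\imap(\his_2')$, so $[\his_1']_\imap=[\his_2']_\imap$; existence of a successor in the quotient follows from the totality of $\phi_{hist}$. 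For the ``only if'' direction, assume the quotient is a \ac{DITS}, take any $\his_1,\his_2$ with $\imap(\his_1)=\imap(\his_2)$ and any $(u,y)$, and set $\his_i':=\phi_{hist}(\his_i,u,y)$. Then both $([\his_i]_\imap,(u,y),[\his_i']_\imap)$ sit in $\Phi_{hist}/\imap$ with the same source and label, so determinism forces $[\his_1']_\imap=[\his_2']_\imap$, i.e., $\imap(\his_1')=\imap(\his_2')$, which is exactly sufficiency.

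I do not expect a genuine obstacle; the substantive content lives in the general quotient theorem recalled in Remark~\ref{rem:SuffDITS}. The only point requiring a brief justification is that the source system is full, which holds trivially because $\phi_{hist}$ is a total function rather than merely a relation. The sole presentational decision is whether to keep the proof as a one-line invocation of the general result or to expand it into the direct two-direction argument above for the reader's convenience; I would write out the direct version, since it is short and keeps the section readable without forcing the reader to chase the cited reference.
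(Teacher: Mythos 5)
Your proposal matches the paper's own treatment: the paper derives Proposition~\ref{prop:suff_label} as an immediate special case of \citep[Proposition 4.5]{WeiSakLav22} (recalled in Remark~\ref{rem:SuffDITS}), noting exactly as you do that $(\Ifshist, U\times Y, \Phi_{hist})$ is full because $\phi_{hist}$ is a total function. Your optional expanded two-direction argument is a correct but inessential elaboration of the same route.
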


\begin{remark} \label{remark:isomorphism_quotient}
 For an \ac{Imap} $\imap\colon \Ifshist \to \Ifsder$,
the quotient $(\Ifshist/\imap, U \times Y, \Phi_{hist}/\imap)$ is isomorphic to
$(\Ifsder,U \times Y,\Phi_{der})$, in which
\[\Phi_{der}=\{(\imap(\his),(u,y),\imap(\his')) \mid (\his, (u,y), \his') \in \Phi_{hist}\}\]
\cite[\newnew{Proposition}~2.37]{WeiSakLav22}. Thus, we can use the labels
introduced by an $\imap$ as the new (derived) \ac{Ispace} and the
corresponding quotient system as the derived~\ac{ITS}.   
\end{remark}

Suppose \BS{an \ac{Imap}} $\imap$ is sufficient. Then, the derived \ac{ITS} is a \ac{DITS}, meaning that given an \ac{Istate} $\is_{k-1}$ in the derived space 
$\Ifsder$, and ($u_{k-1}$, $y_k$), $\is_{k}\in \Ifsder$
can be uniquely determined. Consequently, we can write the derived \ac{ITS} as
$(\Ifsder,U \times Y,\phi_{der})$ in which
$$\phi_{der}\colon \Ifsder \times (U \times Y) \to \Ifsder$$ is the
new information transition function. Therefore, we no longer need to
rely on the full histories and the history \ac{ITS} and can rely
solely on the derived \ac{ITS}. This is shown in the first two rows of
the following diagram:
\begin{equation}\label{eqn:dia}
  \begin{tikzcd}[row sep=0.3cm]
    \Ifshist \arrow{r}{u_1,y_2} \arrow{d}{\imap} &
    \Ifshist \arrow{r}{u_2,y_3} \arrow{d}{\imap} &
    \Ifshist \arrow{r}{u_3,y_4} \arrow{d}{\imap} &
    \Ifshist \arrow{d}{\imap} \arrow[dash, dotted]{r} & \phantom{.}\\
    \Ifsder \arrow{r}{u_1,y_2} \arrow{d}{\imap'} &
    \Ifsder \arrow{r}{u_2,y_3} \arrow{d}{\imap'} &
    \Ifsder \arrow{r}{u_3,y_4} \arrow{d}{\imap'} &
    \Ifsder \arrow{d}{\imap'} \arrow[dash, dotted]{r} & \phantom{.}\\
    \Ifsmin \arrow{r}{u_1,y_2} \arrow{d}{\imap''} &
    \Ifsmin \arrow{r}{u_2,y_3} \arrow{d}{\imap''} &
    \Ifsmin \arrow{r}{u_3,y_4} \arrow{d}{\imap''} & 
    \Ifsmin \arrow{d}{\imap''} \arrow[dash, dotted]{r} & \phantom{.}\\
    \Ifst  &
    \Ifst  &
    \Ifst  & \Ifst 
  \end{tikzcd}
\end{equation}
Note that we can similarly define an \ac{Imap} that maps any derived
\ac{Ispace} to another. An example is given in \eqref{eqn:dia} as the
mappings $\imap'\colon\Ifsder \to \Ifsmin$ and
$\imap''\colon\Ifsmin \to \Ifst$. 
\BS{In this example, $\imap'$ is sufficient, visible also from the commutativity of the respected square in the diagram. This implies that the quotient system derived by $\imap'$ is deterministic. On the other hand, $\imap''$ is not sufficient, meaning that the derived ITS is not deterministic: given an element of $\Ifstask$ one cannot uniquely determine the next \ac{Istate} using the derived ITS only. This is shown in \eqref{eqn:dia} with the missing arrows at the respective row of $\Ifstask$. Hence, for $\imap''$ the diagram does not commute.}
Note that an \ac{Imap} whose domain is $\Ifshist$ can also be defined as composition of the mappings along the column of the diagram. For example, $\imap_{min}\colon\Ifshist \to \Ifsmin$ is the
composition of $\imap$ and $\imap'$, that is, $\imap_{min}=\imap' \circ \imap$ (same for $\imapb\colon \Ifshist \to \Ifst$).

\subsection{Model-based and model-free} \label{Section_Model_based_model_free}

In machine learning, control, and robotics literature,
methods are often categorized into \emph{model-based} and
\emph{model-free} (or data-driven) ones. 
\new{Informally, using our setup, a model-based scenario} 
\newnew{is one where} \new{the derived \ac{Istate} is allowed to depend on knowledge about the external system, the sensor mapping, the initial state, and the disturbances acting on the external system or on the sensors (if there are any).}
The model-free scenario in contrast cannot depend on those, but can depend on data, which in our case is the history I-states, \newnew{that is}, 
the sequences of actions and observations.

\new{
In Section~\ref{ssec:IntExt} we have defined internal-external coupled systems. Their coupling \eqref{eqn:coupled_sys} produces an autonomous system \newnew{$(\Ifs\times X,\phi*_{\pi, h}f)$}. However, we can choose not to consider either one of the coupling functions $\pi$ and $h$, and be left with a system that still has a control parameter. For example, let \newnew{$(\Ifs\times X,U,\phi*_{h}f)$} be a system where the evolution of states can be written as 
\begin{align}
  \centering
  \is' &=\phi(\is,y) &\text{in which } y & = h(x), \nonumber \\
   x'  &=f(x, u). \label{eqn:coupled_sys_U}
\end{align}
Here, $u\in U$ is a control parameter on which the next state always depends.
This system represents the coupled system before a policy $\pi$ has been defined over the states of the internal.}

\newnew{Note that the internal system only has access to the current information state $\iota \in \Ifs$, not to the external state $x\in X$. One can notationally express this perspective by evolving the internal system by an externally parametrized information transition function  $\phi_{f, h}(\cdot \,;\, x)$, which maps the current I-state and action pair $(\iota, u) \in \mathcal{I} \times U$ to the next I-state $\iota' \in \mathcal{I}$. The maps $h$ (which couples the external to the internal system) and $f$ are subsumed into the global map $\phi_{f, h}$ which is additionally parametrized by the current state $x \in X$ of the external system. Thus, in accordance with~\eqref{eqn:coupled_sys_U}, we define $\phi_{f,h}$ for each $(\iota, u) \in \mathcal{I} \times U$ and $x \in X$ by
\begin{equation} \label{eqn:coupled_sys_U_cond}
\phi_{f, h}\big(\iota, u \, ; \, x\big) := \phi\big(\iota, h(f(x,u)) \big).
\end{equation}
If the I-space in~\eqref{eqn:coupled_sys_U} is the history I-space, we can write~\eqref{eqn:coupled_sys_U} as \newnew{$(\Ifshist\times X,U,\phi*_h f)$, and its} 
\new{internal system perspective}
\eqref{eqn:coupled_sys_U_cond} becomes \newnew{$(\Ifshist,U,\phi_{f,h})$}.
We propose that a method of obtaining a derived I-space corresponding to an I-map $\kappa$ is \emph{model-based}, if $\kappa$ is obtained as a function of \newnew{$(\Ifshist\times X,U,\phi*_h f)$}, while it is \emph{model-free}, if it is obtained as a function of \new{it from the perspective of the internal system, that is, as a function of}
\newnew{$(\Ifshist,U,\phi_{f,h})$}.
}

\newnew{The distinction between model-based and model-free is also} seen in the initial states $\eta_0$ of the history I-space. In model-based setups, typically $\eta_0$ is a subset of $X$, or a probability distribution over $X$ while in model-free setups $\eta_0$ is an empty sequence. Examples~\ref{ex:consistent_rot} and \ref{ex:LshapedCorridor} are examples of model-free and model-based \ac{Ispace}s respectively. 

Note that this formalization implies that model-free methods are a subset of model-based. This is because the internal perspective is itself a function of the entire coupled system, so anything that is a function of the internal perspective is by transitivity also a function of the entire coupled system. This matches the intuition that model-based are ones where more information is available. We leave the exploration of more aspects of this distinction and its formalization for future work.

We now present two examples that illustrate model-based and model-free derived ITSs. 

\begin{example}[\textbf{\emph{Bayesian filter}}] \label{eg:Bayesian_filter}
Suppose the initial history information state encodes a probability distribution over $X$ such that $\his_0=P(x_1)$. We refer to the coupled system including the disturbances described in \eqref{eqn:coupled_sys_dist}. A Markovian, probabilistic model of the disturbances is given \newnew{in the form of conditional distributions $P(\psi \mid x)$ over $\Psi$, and $P(\theta \mid x, u)$ over $\Theta$. In the former, conditioning takes place relative to external states $x \in X$, and in the latter relative to state-action pairs $(x,u)\in X \times U$}.
Using the definitions of $f$ and $h$ given in \eqref{eqn:coupled_sys_dist}, $P(y_k \mid x_k)$ and $P(x_{k+1} \mid x_k, u_k)$ can be derived from $P(\psi_k \mid x_k)$ and $P(x_{k+1} \mid x_k, u_k)$ for all stages $k$.

Let $\Ifs_{prob}$ be the set of all probability distributions defined over $X$ and let $\Ifshist$ be a history information space with $\Ifs_0=\Ifs_{prob}$ such that $\his_0$ is a probability distribution over $X$, that is $P(x_1)$. An \ac{ITS} can be derived by $\imap_{prob}\colon \Ifshist \rightarrow \Ifs_{prob}$ such that $\imap_{prob}(\his_k)=\is_k=P(x_k \mid \his_k)$. Note that we can write $\his_k$ as $\his_k=\his_{k-1}\cat (u_{k-1},y_k)$. The I-state $\is_k=P(x_k \mid \his_k)$ can be inductively computed from $\is_{k-1}$ and $(u_{k-1},y_k)$ using marginalization and Bayes' rule starting from $\is_1 = P(x_1\mid y_1)$, in which $\his_1=y_1$. This corresponds to defining $\phi \colon\ifs \times (U \times Y) \rightarrow \ifs$ such that $\is_{k}=\phi(\is_{k-1}, (u_{k-1},y_k))$. Then, \hspace{0.2cm} $\imap(\his_{k-1}\cat (u_{k-1}, y_k))=\phi\circ\imap(\his_{k-1})=P(x_{k}\mid \his_k)$ which shows that $\imap_{prob}$ is sufficient. Hence, a Bayesian filter can be modeled as a derived \ac{DITS} whose state space is $\ifs_{prob}$. Note that in this case, $\kappa_{prob}$ is defined as a function of \newnew{$(\Ifshist\times X,U,\phi *_h f)$}, making it model-based.
\end{example}

Note that the Kalman filter is a special case of a Bayesian filter when $f$ and $h$ are linear and the disturbances are Gaussian. These specifications \newnew{imply that all the posterior distributions are Gaussian as well}.
Therefore, in this special case, the range of $\imap_{prob}$ is implicitly restricted to the set of all Gaussian distributions, denoted as $\Ifs_{Gauss}$, such that $\imap_{prob}\colon \Ifshist \rightarrow \Ifs_{Gauss} \subset \Ifs_{prob}$. This restriction allows the \ac{Istate} to simply encode only the mean and the covariance of a multivariate Gaussian distribution, that is, $\is = (\hat{x},\Sigma)$, in which $\hat{x}$ is the mean and $\Sigma$ is the covariance matrix, without violating the sufficiency of $\imap_{prob}$. An extension of the Kalman filter to nonlinear systems is the Extended Kalman Filter (EKF). In the case of EKF, the functions $f$ and $h$ are not linear. This violates the posterior distribution being Gaussian even if the disturbances are. However, the states of the EKF are defined as elements of $\ifs_{Gauss}$ and a state transition function $\phi \colon \ifs_{Gauss}\times (U \times Y) \rightarrow \ifs_{Gauss}$ is described that relies on linearizing $f$ and $h$ at each \ac{Istate}. Note that even though the Kalman filter and the EKF share the same underlying \ac{Ispace}, the corresponding I-maps that derive these transition systems are different.

The following is an example of a model-free derived ITS.  

\BS{
\begin{example}[\textbf{Moving average filter}] Let $Y= \Re$ and $\imap_{k}: \ifs_k \rightarrow \Re$, in which $\ifs_k$ is the set of $k$ stage histories. A moving average filter (observation only) with a window size $n$ can be derived from $\ifs_k$ as
$$(\histu_{k-1},\histy_k) \mapsto \frac{1}{n}\sum_{i = k-n+1}^k y_i.$$
\end{example}}

\subsection{Lattice of information transition systems}

We fix $\Ifshist$, which corresponds to fixing the set of initial
states $\Ifs_0$. Then, each \ac{Imap} $\imap$ defined over $\Ifshist$
induces a partition of $\Ifshist$ through its preimages, denoted as
$\Ifshist / \imap$.

\begin{definition}[\textbf{Refinement of an \ac{Imap}}]
  An \ac{Imap} $\imap'$ is a \emph{refinement} of $\imap$, denoted as
  $\imap' \succeq \imap$, if $\forall A \in \Ifshist / \imap'$ there
  exists a $B \in \Ifshist / \imap$ such that $A \subseteq B$.
\end{definition}
Let $K(\Ifshist)$ denote the set of all partitions over
$\Ifshist$. Refinement induces a partial ordering since not all
partitions of $\Ifshist$ are comparable.  The partial ordering given
by refinements form a lattice of partitions over $\Ifshist$, denoted
as $(K(\Ifshist),\succeq)$.

At the top of the lattice, there is the partition induced by an
identity \ac{Imap} (or equivalently, by a bijection),
$\imap_{id}\colon  \Ifshist \rightarrow \Ifshist$, since all of its elements
are singletons (all equivalence classes contain exactly one element),
making it the maximally distinguishable case. Conversely, we can
define a constant mapping
$\imap_{const}\colon \Ifshist \rightarrow \Ifs_{const}$ for which
$\Ifshist/\imap_{const}$ is a singleton, that is,
$\Ifs_{const}=\{\is_{const}\}$, which then will be at the bottom of
the lattice. In turn, $\imap_{const}$ yields the minimally
distinguishable case as all histories now belong to a single
equivalence class.  This idea is similar to the notion of the
\emph{sensor lattice} defined over the partitions of $X$ see
\citep{Lav12b,ZhaShe21}. Indeed, if we take $\Ifs_0=X$ and consider
$\imap_{est}\colon \Ifshist \rightarrow X$, the ordering of partitions of
$\Ifshist$ such that $\Ifshist/\imap_{est}$ is the least upper bound
gives out the sensor lattice.

As motivated in previous sections, we are interested in finding a
sufficient \ac{Imap} such that the quotient \ac{ITS} derived from the
history \ac{ITS} is still deterministic. Notice that the constant
\ac{Imap} $\imap_{const}$ is sufficient by definition since for all
$(u,y) \in U\times Y$, and all $\his,\his' \in \Ifshist$, we have that
$\imap_{const}(\his)=\imap_{const}(\his')$ and
$\imap_{const}(\phi_{hist}(\his,(u,y)) =
\imap_{const}(\phi_{hist}(\his',(u,y)).$ On the other hand, in certain
cases it is crucial to differentiate certain histories from
others. This will become clear in the next section when we describe
the notion of a task.  Suppose $\imap$ is a labeling that partitions
$\Ifshist$ into equivalence classes that are of importance and suppose
that $\imap$ is not sufficient. Then, we want to find a refinement of
$\imap$ that is sufficient.  This will serve as a lower bound on the
lattice of partitions over $\Ifshist$ since for any partition such
that $\Ifshist/\imap$ is a refinement of it, the
classes of histories that are deemed crucial will not be
distinguished. The following defines the refinement of $\imap$ that
ensures sufficiency and a minimal number of equivalence classes.

\begin{definition}[\textbf{Minimal sufficient refinement}]
  Let $(\Ifshist,U \times Y, \fmaphist)$ be a history \ac{ITS} and
  $\imap$ an \ac{Imap}. A \emph{minimal sufficient refinement} of
  $\imap$ is a sufficient \ac{Imap} $\imap'$ such that there does not
  exist a sufficient \ac{Imap} $\imap''$ that satisfies
  $\imap' \succ \imap'' \succeq \imap$.
\end{definition}

\begin{remark}\label{rem:UniqueMSR}
  It is shown in \citep[Theorem 4.19]{WeiSakLav22} that the minimal sufficient refinement of $\imap$ defined over the states of a deterministic transition system $(S,\Lambda,\tau)$ is unique up to relabeling, namely if $\imap_{\min}$ and $\imap_{\min}'$ are minimal sufficient refinements, then $\imap_{\min} \succ \imap'_{\min}$ and $\imap'_{\min} \succ \imap_{\min}$.
\end{remark}

\section{Solving Tasks Minimally}\label{sec:problems}

\subsection{Definition of a task}

\new{In this section, we formulate general planning and filtering tasks within the framework of information transition systems.} We 
distinguish between two categories: 1) {\em active}, which \new{entails}
planning and executing an information-feedback policy that forces a desirable outcome in the \new{external system}, 
and 2) {\em passive}, which {refers to}
only observing the \new{external system}
without being able to effect changes.  
\newnew{We next describe active and passive tasks for the model-free and model-based I-space formulations, introduced in Section~\ref{Section_Model_based_model_free}.}
In the model-free case, tasks are specified using a logical language over
$\Ifshist$. This results
in a labeling, \new{a} derived I-space $\Ifst$,
and {the} associated I-map $\imapb$.
Various logics are allowable, such
as propositional, modal, or a temporal logic.  
The resulting sentences of the
language involve combinations of predicates that 
assign true or false values to subsets of $\Ifshist$.
Solving an active task 
requires that a sentence of interest becomes true during execution of the policy. This is called {\em satisfiability}. For example, the task may be to simply reach some goal set $G \subseteq \Ifshist$, causing a predicate {\sc in-goal}$(\Ifshist)$ to
become satisfied (in other words, be true).

Solving a passive task only requires maintaining whether a sentence is satisfied, rather than forcing an outcome; this corresponds to filtering. Whether the task is active or passive, if satisfiability is concerned with a single, fixed sentence, then a {\em task-induced labeling} (or {\em task labeling} for short), that is, $\imapb$, over $\Ifshist$ assigns two labels: Those I-states that result in true and
those that result in false.  
\new{A task labeling may also be assigned for a set of
sentences. In this case, each sentence induces a partition of $\ifshist$, and
the task labeling over $\Ifshist$ assigns a label to each set in the common refinement of these partitions.
}   
In the model-based case, tasks are instead
specified using a language over $X$, and sentence satisfiability must be determined by an I-map that converts history I-states into expressions over $X$.


\new{Some naturally occurring robot tasks can only be described in terms of infinite sequences of actions and observations.
These are called \emph{infinitary tasks}.
For example, cycling through a finite sequence of subsets of $X$ indefinitely while avoiding others \citep{FaiGirKrePap09} 
can only be described in terms of infinite histories. 
For this task, whether the sentence of interest is satisfied cannot be determined based on a finite history of any given length. However, the histories that fail, that is, those for which the sentence of interest becomes false, can be defined in terms of finite histories (namely those that result in a state that needed to be avoided). Interested reader can refer to \citep{KreFaiPap09} for examples based on linear temporal logic (LTL).
%

Infinitary tasks
are
defined on the set of infinite histories $\Ifs_{hist}^\infty$
which consists of infinite sequences of the form $\bar\eta = (\eta_0,y_1,u_1,y_2,u_2,\cdots)$. These are the elements of the infinite Cartesian product
\begin{equation*}
\ifs_0 \times (Y \times U) \times (Y \times U) \times \cdots \,
= \, \ifs_0 \times \prod_{k=1}^\infty (Y \times U).
\end{equation*}

The preimages of an \emph{infinitary task labeling} $\bar\kappa : \Ifs_{hist}^\infty \to \Ifstask$ are subsets of $\Ifs_{hist}^\infty$. Although the satisfiability of an infinitary task may depend on infinite sequences, these can nevertheless be characterised in terms of finite initial segments as follows. Any subset $H \subseteq \Ifs_{hist}^\infty$ can be written as
\begin{equation} \label{Eq_product_space_subset}
H = I_0 \times \prod_{k=1}^\infty (Y_k \times U_k),
\end{equation}
in which $I_0 \subseteq \Ifs_0$, and $Y_k \subseteq Y$, $U_k \subseteq U$ for all $k \in \N$. For each $m \in \N$,
we denote by $H(m)$ the collection of subsets of $\Ifs_{hist}^\infty$ for which $Y_k = Y$ and $U_k = U$ for all $k > m$, that is, 
\begin{multline*}
H(m) = \Big\{ I_0 \times \prod_{k=1}^m \left(Y_k \times U_k \right) \times \prod_{k=m+1}^\infty \left(Y \times U \right) \\
 \mid\, I_0 \subseteq \Ifs_0,\, Y_k \subseteq Y,\, U_k \subseteq U,\, k=1,\dots,m \Big\}.
\end{multline*}
In other words, such collections of histories are constrained only at a finite number of stages.

Now, let $\iota \in \Ifstask$ and suppose an equivalence class induced by the preimage $\bar\kappa^{-1}(\iota)$ is a (potentially infinite) union
\begin{equation} \label{Eq_union_of_base_sets}
\bar\kappa^{-1}(\iota) = \bigcup_{\alpha \in A} H_\alpha,
\end{equation}
in which $A$ is some index set and each $H_\alpha$ belongs to $H(m)$ for some $m$. Then, whether a particular history $\bar\eta \in \Ifs_{hist}^\infty$ belongs to $\bar\kappa^{-1}(\iota)$ is determined by a finite number of stages in an initial segment of this history. In general, however, the length of these initial segments is not bounded from above.

To characterize infinitary tasks in terms of deciding their satisfiability, we rely on topology. 
Assume that some topology is defined for the sets $\Ifs_0$, $Y$, and $U$. If these are finite sets, a natural choice is the discrete topology in which every singleton is an open set. For subsets of $\Re^n$, a natural choice would be the relative topology induced by the usual Euclidean topology in $\Re^n$. The base $H^\circ$ of the product topology in $\Ifs_{hist}^\infty$ consists of those sets $H$ for which $H \in H(m)$ for some $m \in \N$, and the sets $I_0$, $Y_k$, $U_k$ in~\eqref{Eq_product_space_subset} satisfy that $I_0$ is open in $\Ifs_0$, and $Y_k$, $U_k$ are open in $Y$, $U$, respectively, for all $k \in \N$.
All other open sets are obtained as arbitrary unions of sets $H \in H^\circ$. In particular, when the sets $H_\alpha$ in \eqref{Eq_union_of_base_sets} satisfy $H_\alpha \in H^\circ$ for all $\alpha \in A$, the corresponding preimage $\bar\kappa^{-1}(\iota)$ is an open set.
%
%
If the sets $\Ifs_0$, $Y$, $U$ are compact, the space $\Ifshist^\infty$ is compact in the product topology. This is the case for example when $\Ifs$, $Y$, and $U$ are finite sets with the discrete topology. %

\BS{In the simplest nontrivial case, a task labeling $\bar\kappa$ concerns a single sentence. Then, $\Ifstask = \{0,1\}$ so that the preimages of $\bar\kappa$ partition $\Ifshist^\infty$ into the equivalence classes $\bar\kappa^{-1}(1)$, that is, the set of histories for which the sentence is satisfied, and $\bar\kappa^{-1}(0)$, the set of histories for which the sentence is false.
We call $\bar\kappa^{-1}(1)$ and $\bar\kappa^{-1}(0)$ the \emph{success set} and \emph{fail set}, respectively.}
If the success set of a given task is open, we call this an \emph{open task}.
A \emph{closed task} is one whose fail set is open, so that its success set is closed. 
It is possible for a task to be both open and closed, so that both the success and fail sets are both open and closed. We call such tasks \emph{clopen}.

\BS{Due to the definition of the sets $H(m)$, the membership of a given history in an open set}
\KT{is determined by some} \BS{finite initial segment of that history. Therefore, based on a finite length segment of a given history, we can determine its membership in the success set of an open task, in the fail set of a closed task, and both the success and fail sets for a clopen task. In these cases, task satisfiability can be defined} 
\KT{in terms of}
\BS{the elements of $\ifshist$. We can thus transcribe an infinitary task labeling $\bar\kappa: \ifshist^\infty \rightarrow \Ifstask$ in the form of $\kappa: \ifshist \rightarrow \Ifstask$.} 
\KT{This amounts to assigning to each finite history $\eta \in \Ifshist$ some task label $\iota \in \Ifstask$ in such a way that this labeling expresses the success set of the corresponding infinitary task labeling $\bar\kappa$. 
Recall that $\Ifshist = \bigcup_{k \in \N} \mathcal{I}_k$, in which $\mathcal{I}_k$ are as in~\eqref{Eq_Def_finite_Ifshist}.
Suppose $\bar\kappa^{-1}(\is)$ is an open set for some $\iota \in \Ifstask$ so that $\bar\kappa^{-1}(\is) = \bigcup_{\alpha \in A}H_\alpha$ as in~\eqref{Eq_union_of_base_sets}, with $H_\alpha$ open for all $\alpha \in A$. We may assume that $H_\alpha \in H^\circ$ for all $\alpha \in A$. Then, for a finite history $\his = (\eta_0, y_1, u_1, \ldots, u_{k-1}, y_k) \in \mathcal{I}_k$ 
we define $\kappa(\his) = \is$ if and only if there exists some $m \leq k$ and some index $\alpha \in A$ for which the corresponding set 
\[
\hspace{-0.4mm} H_\alpha \hspace{-0.25mm} = \Ifs_0 \hspace{-0.25mm} \times \hspace{-0.25mm} (Y_1 \hspace{-0.25mm} \times \hspace{-0.25mm} U_1) \hspace{-0.25mm} \times \hspace{-0.25mm} \cdots \hspace{-0.25mm} \times \hspace{-0.25mm} (Y_m \hspace{-0.25mm} \times \hspace{-0.25mm} U_m) \hspace{-0.25mm} \times \hspace{-2.5mm}\prod_{n=m+1}^\infty \hspace{-1mm} (Y \times U),
\]
satisfies 
$y_n \in Y_n$ and $u_n \in U_n$ for all $1 \leq n \leq m$.}

Below are examples of typical model-based task descriptions with their corresponding definitions in terms of infinite histories \BS{that can be expressed using a task-labeling over $\ifshist$}. 

\begin{example}[\textbf{\emph{Reach state $x \in X$ from an initial state $x_0 \in X$}}]\label{ex:open_task}
The success set of this task consists of those histories that correspond to the external system arriving to $x$ in some finite time. If $H_m \subseteq \Ifshist^\infty$ contains those histories in which $x$ is visited for the first time at stage $m$, the success set of this task is $\bar\kappa^{-1}(1) = \bigcup_{m \in \N} H_m$. Assuming $H_m \in H^\circ$ for every $m \in \N$, this task is open.
\end{example}

\begin{example}[\textbf{\emph{Never visit state $x \in X$}}]\label{ex:closed_task}
The fail set of this task consists of those histories that correspond to the external system arriving at $x$ from some initial state $x_0$. Since this always happens in finite time (or not at all), the fail set can be expressed as the union $\bigcup_{m\in\N} H_m$, where $H_m$ consists of all the histories in which state $x$ is reached for the first time on stage $m$. Assuming $H_m \in H^\circ$ for all $m \in \N$, the task is therefore closed.  
\end{example}

\begin{example}[\textbf{\emph{Reach state $x_1$ while avoiding state $x_2$}}] \label{ex:clopen_task}
The success set of this task may be written as the union, over $m \in \N$, of histories which reach $x_1$ for the first time after $m$ stages, and did not visit $x_2$ during the first $m-1$ stages. Assuming these sets are open, the success set is thus open. With an analogous argument, it is seen that the fail set is also open.
\end{example}

\BS{
An infinitary task is not necessarily either open or closed. One example of this are tasks that can be expressed as so-called $G_\delta$ sets~\citep[Section 3.6]{Dug78}, that is, infinite intersections of open sets (see Example~\ref{ex:G_delta_task}).}

\begin{example}[\textbf{\emph{Revisit
state $x \in X$ infinitely many times}}]\label{ex:G_delta_task}
In this case, neither the success set or fail set of this task can be defined in terms of the sets $H(m)$, since no finite length history can rule out either success or failure. For each $m,k \in \N$, define $H_{m,k} = \{ \bar\eta \in \Ifshist^\infty \, \mid \, \textrm{at stage}\, m, \, \textrm{the next visit to} \, x\, \textrm{happens after}\, k\, \textrm{stages} \}$. Then the success set is given by $\bar\kappa^{-1}(1) = \bigcap_{m \in \N} \bigcup_{k \in \N} H_{m,k}$ which is an infinite intersection of open sets if $H_{m,k}\in H^\circ$ for all $m,k$.
This set is not generally open, but belongs to the broader class $G_\delta$.
\end{example}
}

\BS{In this paper, we consider tasks that are expressed as a labeling over $\ifshist$ or those over $\ifshist^\infty$ that can be transcribed as one over $\ifshist$. Hence, the problem families that we will introduce in Section~\ref{ssec:prob_family} would refer to these types of tasks.}
All the tasks in examples in Section~\ref{sec:ex} are either open or
closed and thus are representable by either a finitary success or
failure condition. More generally, if one defines tasks using any
common version of temporal logic, the corresponding success sets are
always going to be Borel, that is, members of the sigma algebra generated by open sets~\citep[Section 3.6]{Dug78}.

\subsection{Problem families}\label{ssec:prob_family}

It is assumed that the state-relabeled transition system
$(X, U, f, h,Y)$ describing the external system is fixed, but it is
unknown or partially known to the observer (a robot or other
observer).

Filtering (passive case) requires maintaining the label of an
\ac{Istate} attributed by $\imapb$. Since $\imapb$ is not necessarily
sufficient, we cannot guarantee that the quotient system by $\imapb$
is a \ac{DITS} (Propositions~\ref{prop:non_sufficient_label} and
\ref{prop:suff_label}). This implies that relying solely on the quotient system by
$\imapb$, we cannot determine the class that the current history
belongs to (see the last row in \eqref{eqn:dia}). Hence, we cannot
determine whether a sentence describing the task is satisfied (or
which sentences are satisfied).

Suppose the sets $U$ and $Y$ are specified, and at each stage $k$, the action $u_{k-1}$ is known and $y_k$ is observed. The following describes the
problem for a passive task given a state-relabeled (history) \ac{ITS}
$(\Ifshist, U \times Y, \phi_{hist}, \imapb, \Ifstask)$, in which
$\imapb\colon\Ifshist \to \Ifstask$ is a task labeling (that is not
assumed to be sufficient), and $\Ifstask$ is the corresponding~\ac{Ispace}.

\begin{problem}[\textbf{Find a sufficient \ac{Ispace} filter}]\label{prob:suff_ref}
  Find a sufficient refinement of $\imapb$.
\end{problem}

Note that $\Ifshist/\imapb$ determines a lower bound on the
partitioning of $\Ifshist$ which is interpreted as the crucial
information that cannot be relinquished without losing predictability, or
success guarantees. Consequently, histories belonging to
different equivalence classes with respect to $\imapb$ must always be
distinguished from each other.

\begin{example}[\textbf{\emph{Goal recognition}}]\label{ex:goal_task}
  Suppose $\imapb\colon \Ifshist \to \Ifstask$ is a labeling that
  partitions $\Ifshist$ into two disjoint sets;
  $\Ifstask=\{\is_G, \is_{NG}\}$, in which $\imapb^{-1}(\is_G)$ and
  $\imapb^{-1}(\is_{NG})$ correspond to histories that lead to goal
  and the ones that do not, respectively. Suppose the goal is
  recognizable, meaning that, solely based on $y_k$, the value of
  $\imapb(\his_k)$ is known, for all $\his_k \in \Ifshist$ and
  $k>0$. Then, $\imapb$ is trivially sufficient (also
  minimal). However, if the sensor mapping does not directly provide
  this information, then a refinement is needed to describe a
  sufficient filter that infers whether the goal is reached.
\end{example}        

Notice that 
Problem~\ref{prob:suff_ref} does not impose 
an upper bound. At the limit, a bijection from $\Ifshist$ is always a
sufficient refinement of $\imapb$. As stated previously, using history
\ac{ITS} can create computational obstructions in solving
problems.
This motivates the following problem.

\begin{problem}[\textbf{Find a minimal sufficient I-filter}]\label{prob:min_suff_ref}
  Find a minimal sufficient refinement of $\imapb$.
\end{problem}

We now consider a basic planning problem for which $\Ifst=\{0,1\}$,
such that $\imapb^{-1}(1)\subset \Ifshist$ is the set of histories
that achieve the goal, and $\imapb^{-1}(0)\subset \Ifshist$ is its
complement.  Most planning problems refer to finding a labeling
function $\pi$ such that, when used to label the states of the
internal system, guarantees task accomplishment. Then $\pi$ is called
a {\em feasible} policy, which is formally defined in the following.
Consider an external system $(X, U, f, h, Y)$.  Let
$\mathcal{R}_X(\Ifstask) \subseteq X$ be the set of initial states for
which there exist a $k \in \N$ and histories $\histx_k$, $\histu_{k-1}$, and
$\histy_k$, such that $x_{i+1}=f(x_i,u_i)$ and $y_i=h(x_i)$ for all
$0<i<k$, and $\his_k \in \imapb^{-1}(1)$, in which $\his_k$ is the
history \ac{Istate} corresponding to $\histu_{k-1}$ and $\histy_k$. \BS{Informally, $\mathcal{R}_X(\Ifstask)$ is the set of initial states of the external system for which there exists an action sequence such that the evolution of the external system under this action sequence results in histories that satisfy the task description. We will then call $\mathcal{R}_X(\Ifstask)$ the \emph{backward reachable set for $\Ifstask$}, analogously to the use of the same term in control theory.}

\begin{definition}[\textbf{Feasible policy for $\Ifst$}]
Let $(\Ifs, Y, \phi, \pi, U)$ and $(X,U,f, h, Y)$ be the state-relabeled transition systems corresponding to internal and external systems, respectively. A labeling function $\pi: \Ifs \to U$ 
is a feasible policy for $\Ifst$ if for all \BS{$x$ in the backward reachable set for $\Ifstask$, that is,} $x \in \mathcal{R}_X(\Ifst)$,
at least one history $\eta_k$ corresponding to the coupled 
internal-external system~\eqref{eqn:coupled_sys}
initialized at $(\is_0, x)$  belongs to~$\imapb^{-1}(1)$.
\end{definition}

Most problems in the planning literature consider a fixed \ac{DITS}
and look for a feasible policy for $\Ifst$. This yields the following
problem. Typically, the \ac{Ispace} considered is $X$ which makes the
resulting $\pi$ a \emph{state-feedback policy}\endnote{The term \emph{state} in state-feedback policy refers to the states of the external system, as it is commonly used in the robotics literature.}. Note that a \ac{DITS},
in other words, the robot brain, is an \ac{Ispace} filter itself.

\begin{problem}[\textbf{Find a feasible policy}]
  \label{prob:planning_wDITS}
  Given $(\Ifs, Y, \phi)$, find a labeling function
  $\pi\colon \Ifs \to U$ that is a feasible policy for~$\Ifst$.
\end{problem}

We can further extend the planning problem to consider an unspecified internal system. This entails finding a \ac{DITS} $(\Ifs, Y, \phi)$ and a policy $\pi\colon \Ifs \rightarrow U$ such that the resulting histories of the coupled system $(\Ifs\times X,\phi*_{\pi,h}f)$ belong to $\imapb^{-1}(1)$, that is, they satisfy the task description. This is the problem of jointly finding an \ac{Ispace}-filter and a feasible policy defined over its states. 
Let $\mathcal{K}$ be the set of all I-maps defined over $\Ifshist$. For $\imap \in \K$, let $\Pi_\imap$ be the set of all policies (labeling functions) that can be defined over the states of $\ifs$ which is the image of the mapping $\imap \colon \ifshist \rightarrow \ifs$.

\begin{problem}[\textbf{Find a DITS and a feasible policy}]
  \label{prob:planning_noDITS} 
  Find a pair 
  $(\imap, \pi)\in\{(\imap, \pi)\mid\imap\in\mathcal{K}\land\pi\in\Pi_\imap\}$ 
  such that $\imap$ is sufficient and $\pi$ is a feasible policy for~$\Ifstask$.
\end{problem}
Suppose $\imap\colon \ifshist \rightarrow \ifs$ and assume $(\kappa, \pi)$ is a solution to 
Problem~\ref{prob:planning_noDITS}. This corresponds to the DITS $(\ifs, Y, \phi_\pi)$ 
and a feasible policy $\pi\colon\ifs \rightarrow U$ such that $(\ifs, Y, \phi)$ is the 
derived ITS by $\imap$ and $(\ifs, Y, \phi_\pi)$ is the restriction of it by~$\pi$.

We emphasize that finding a \ac{DITS} for a planning problem differs
from Problems~\ref{prob:suff_ref} and \ref{prob:min_suff_ref} in the
sense that we are not looking for a refinement of $\imapb$. The reason
for this difference is because $\imapb$ can already be sufficient,
hence, it is the minimal sufficient refinement of itself. However,
this does not necessarily imply the existence of a feasible policy
defined over $\Ifstask$. For example, consider the $\imapb$ described
in Example~\ref{ex:goal_task} and a sensor mapping that reports whether
the goal is reached or not. Even though $\imapb$ is sufficient in this case,
knowing when the goal is reached does not imply, in most cases, that a
feasible policy exists as a labeling function for the quotient system
by $\imapb$, that is, over the states of $\Ifst$.
On the other hand, we can still talk about a notion of
minimality. This notion is defined in the following.

\begin{definition}[\textbf{Minimal DITS for $\mathbf{\pi}$}]\label{def:min_DITS_pi}
Let $\imap\colon \ifshist \rightarrow \ifs$ and $\pi\colon\ifs \rightarrow U$ be a solution to Problem~\ref{prob:planning_noDITS}. Furthermore, let $(\ifshist, Y, \phi_{hist,\pi\circ\imap})$ be the restriction of $(\ifshist, U\times Y, \phi_{hist})$ by $\pi \circ \kappa$. Denote by $(\ifshist', Y, \phi_{hist,\pi\circ\imap})$ the subgraph of $(\ifshist, Y, \phi_{hist,\pi\circ\imap})$ from which the nodes that are not reachable from $\his_0$ have been pruned. We restrict the domain of I-maps $\imap$ and $\imap'$ to $\ifshist'$.
Then, $(\ifs, Y, \phi, \pi, U)$, determined by $\imap$ and $\pi$, is \emph{minimal for $\pi$} if there does not exist a sufficient \ac{Imap} $\imap'$ with $\imap \succ \imap'$ and a corresponding policy $\pi'$ for the quotient system by $\imap'$ that satisfy $\pi\circ \imap = \pi' \circ \imap'$.

\end{definition}

Informally, a minimal \ac{DITS} for $\pi$ implies that one cannot further reduce the quotient system by merging equivalence classes induced by $\imap$, while simultaneously ensuring that when coupled to the external system that is initialized at the same state, the coupling would result in the same observation and action histories as $(\Ifs, Y, \phi, \pi, U)$.

There may be multiple pairs of $(\imap, \pi)$ that solve the same
problem.  Given two DITS,
$(\Ifshist/\imap_1, Y, \Fmaphist/\imap_1, \pi_1, U)$ and
$(\Ifshist/\imap_2, Y, \Fmaphist/\imap_2, \pi_2, U)$, a notion of
equivalence can be determined if the preimages of
$\pi_1 \circ \imap_1$ and $\pi_2 \circ \imap_2$ partition $\Ifshist$
in the same way. We can say that
$(\Ifshist/\imap_1, Y, \fmaphist, \pi_1, U)$ requires more histories
to be distinguished if the partitioning induced by
$\pi_1 \circ \imap_1$ is a refinement over the partitioning induced
by $\pi_2 \circ \imap_2$.

Suppose a feasible policy $\pi\colon \ifshist \rightarrow U$ is defined over the states of the history ITS $(\ifshist, U\times Y, \phi_{hist})$. The restriction of the history ITS to the policy $\pi$ is then $(\ifshist, Y, \phi_{{hist}_\pi})$ (recall the definition given in Section~\ref{sec:ITS_prespectives} of restriction of a DITS).
This is a particular case that solves Problem~\ref{prob:planning_noDITS} for which $(\imap, \pi)$ is the pair such that $\imap\colon \ifshist \rightarrow \ifshist$ is a bijection. Let $(\ifshist', Y, \phi_{{hist}_\pi})$ be the restriction by $\pi$ from which the states that are not reachable from $\his_0$ are pruned. 
Note that $\ifshist' \newnew{\subseteq} \ifshist$ is the set of histories that can be realized once the history ITS is restricted by the policy~$\pi$. Restricting the domain of $\pi$ to $\ifshist'$, we obtain a labeling function over the states of $(\ifshist', Y, \phi_{{hist}_\pi})$ which determines the classes of histories that
\BS{are distinguished by the actions selected under the policy $\pi$. To ensure that the same action histories are obtained when a derived \ac{DITS} (quotient of the history ITS by $\imap'$) is coupled to the external system, the \ac{Imap} $\imap'$ needs to be a refinement of $\pi$. Consequently, the following proposition establishes the connection between a policy $\pi$ defined over $\ifshist$ and its respective minimal DITS.}

\begin{proposition}[\textbf{The minimal sufficient refinement of a feasible policy $\pi: \ifshist \rightarrow U$ determines its minimal DITS}]
\label{prop:min_DITS_pi}
Let $(\imap, \pi)$ be a pair that solves Problem~\ref{prob:planning_noDITS} such that $\imap\colon \ifshist \rightarrow \ifshist$ is a bijection. Then, a minimal DITS for $\pi$ is the DITS $(\ifs, Y, \phi)$ derived from $(\ifshist', Y, \phi_{{hist}, \pi})$ by some minimal sufficient refinement $\imap'$ of~$\pi$. 
\end{proposition}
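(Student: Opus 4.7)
The plan is to verify the conditions of Definition~\ref{def:min_DITS_pi} for the DITS derived from $(\ifshist', Y, \phi_{hist, \pi})$ by a minimal sufficient refinement $\imap'$ of $\pi$, treating $\pi\colon\ifshist' \to U$ throughout as both a policy and a labeling function that partitions histories by the action assigned. Since $\imap$ in the proposition is a bijection, we can identify the pair $(\imap,\pi)$ of the hypothesis with $\pi$ itself without loss.

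First, I would invoke Remark~\ref{rem:UniqueMSR} on the deterministic transition system $(\ifshist', Y, \phi_{hist, \pi})$ with $\pi|_{\ifshist'}$ regarded as a labeling to obtain a minimal sufficient refinement $\imap'\colon \ifshist' \to \ifs$. By Proposition~\ref{prop:suff_label} together with Remark~\ref{remark:isomorphism_quotient}, the induced quotient $(\ifs, Y, \phi)$ is a DITS. Because $\imap' \succeq \pi$ by definition of refinement, every class in $\ifs$ collapses to a single action under $\pi$; this lets me define an induced policy $\hat\pi\colon \ifs \to U$ by $\hat\pi(\is) := \pi(\his)$ for any $\his \in (\imap')^{-1}(\is)$. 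The definition is unambiguous and yields $\hat\pi \circ \imap' = \pi$ by construction.

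Next, I would verify that $(\imap',\hat\pi)$ also solves Problem~\ref{prob:planning_noDITS}. Sufficiency of $\imap'$ holds by choice. Feasibility of $\hat\pi$ transfers directly from that of $\pi$: for any $x \in \mathcal{R}_X(\ifstask)$, the coupling of the external system with $(\ifs,Y,\phi)$ under $\hat\pi$ produces at every stage the same applied action as the coupling with $(\ifshist,U\times Y,\phi_{hist})$ under $\pi$, since $\hat\pi \circ \imap' = \pi$; hence the generated external trajectory reaches a history in $\imapb^{-1}(1)$ by assumption on $\pi$.

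The central step is minimality. Suppose for contradiction that there exist a sufficient I-map $\imap''$ with $\imap' \succ \imap''$ and a policy $\pi''$ on the corresponding quotient such that $\hat\pi \circ \imap' = \pi'' \circ \imap''$. Since $\hat\pi \circ \imap' = \pi$, we obtain $\pi = \pi'' \circ \imap''$, which forces $\imap'' \succeq \pi$, as any two histories lying in a common $\imap''$-class must then share the same $\pi$-value. Consequently, $\imap''$ is a sufficient refinement of $\pi$ strictly coarser than $\imap'$, contradicting the choice of $\imap'$ as a minimal sufficient refinement of $\pi$. Therefore $(\ifs, Y, \phi, \hat\pi, U)$ is minimal for $\pi$. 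The main obstacle I anticipate is purely notational, namely keeping straight the dual role of $\pi$ as policy and as labeling, together with the distinction between refinement of an I-map and minimality of the associated DITS; once this bookkeeping is unpacked, the argument is an essentially immediate transfer of the minimality condition between two equivalent formulations.
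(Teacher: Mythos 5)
Your proposal is correct and follows essentially the same route as the paper's proof: define the induced policy on the quotient via $\pi'(\imap'(\his))=\pi(\his)$, which is well defined because $\imap'$ refines $\pi$, and then conclude minimality of the derived DITS from the minimality of the sufficient refinement $\imap'$. You merely spell out explicitly (the contradiction step showing any coarser sufficient $\imap''$ with matching composition would refine $\pi$, plus the feasibility transfer) what the paper leaves implicit in its appeal to Definition~\ref{def:min_DITS_pi}.
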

\begin{proof}
Since $\imap'$ is a minimal sufficient refinement of $\pi$, it is sufficient and $\nexists \imap''$ that satisfies $\imap' \succeq \imap'' \succeq \pi$. Since it is a refinement, every set in $\ifshist'/\imap'$ is a subset of $\ifshist'/\pi$. Thus, we can find a $\pi'\colon\ifs \to U$ such that $\pi'(\imap'(\his))= \pi(\his)$. Then, by Definition~\ref{def:min_DITS_pi}, $(\ifs, Y, \phi)$ labelled with $\pi'$ is a minimal DITS for $\pi$. \qed
\end{proof}

\subsection{Learning a sufficient ITS}

Although learning and planning overlap significantly, some unique
issues arise in pure learning \citep{WeiSakLav22}.
This
corresponds to the case when $\imapb\colon\Ifshist \rightarrow \Ifst$ is
not initially given but needs to be revealed through interactions with
the external system, that is, respective action and observation
histories. It is assumed that whether the sentence (or sentences)
describing the task is satisfied or not can be assessed at a
particular history \ac{Istate}.

We can address both filtering and planning problems defined previously
within this context, considering model-free and model-based cases. In
the model-free case, the task is to compute a minimal sufficient ITS
that is consistent with the actions and observations. Variations
include {\em lifelong learning}, in which there is a single, long
history I-state, or more standard learning in which the system can be
restarted, resulting in multiple {\em trials}, each yielding a
different history I-state. In the model-based case, partial
specifications of $X$, $f$, and $h$ may be given, and unknown
parameters are estimated using the history I-state(s).  Different
results are generally obtained depending on what assumptions are
allowed.  For example, do identical history I-states imply identical
state trajectories? If not, then set-based, nondeterministic models
may be assumed, or even probabilistic models based on behavior
observed over many trials and assumptions on probability measure,
priors, statistical independence, and so on.

\section{Applying the Theory}\label{sec:ex}

In this section we provide some simple filtering and planning problems
and show how the ideas presented in the previous sections apply to
these problems.  All problems defined in the previous section can be
posed in a 
learning context as well. Then, $\Ifst$ is not given but it is
revealed through interactions between the internal and external as the
input-output data. 
Finally, we formulate as derived ITSs two established approaches, \emph{diversity-based inference} and \emph{predictive state representations}, for obtaining compact representations of the input-output (action-observation) relations for an unknown external system. These techniques illustrate the model-free approach to representing the internal-external coupling.

\subsection{Red-green gates}

\begin{figure}[t!]
    \centering
    \includegraphics[width=0.5\linewidth]{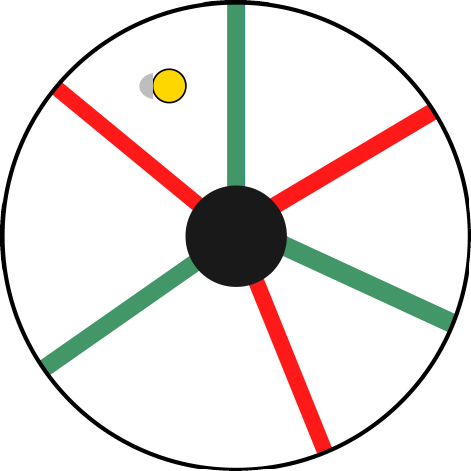}
    \caption{Environment used in
    Examples~\ref{ex:consistent_rot},\ref{ex:plan_consistent_rot}, the
    obstacle (an open disk) is shown in black.}
    \label{fig:red_green_env}
\end{figure}

This example is inspired by \citep{TovCohLav08}. Let $E\subseteq \Re^2$ be an 
annulus that is partitioned into non-empty regions
separated by gates, see Figure~\ref{fig:red_green_env}.  
Each gate is either green or red. This color can
be detected by the robot's color sensor and follows the rule that each region shares a boundary with exactly two gates; one green and one red.  The set of possible observations are therefore $Y=\{r,g\}$. 
As in~\citep{TovCohLav08} we assume that the robot's trajectory is in general position with respect to the gates,
in the sense that it only crosses them transversally and never goes through an intersection of two gates.

\begin{figure*}[t!]
  \centering
  \subfigure[]{\includegraphics[width=.48\linewidth]{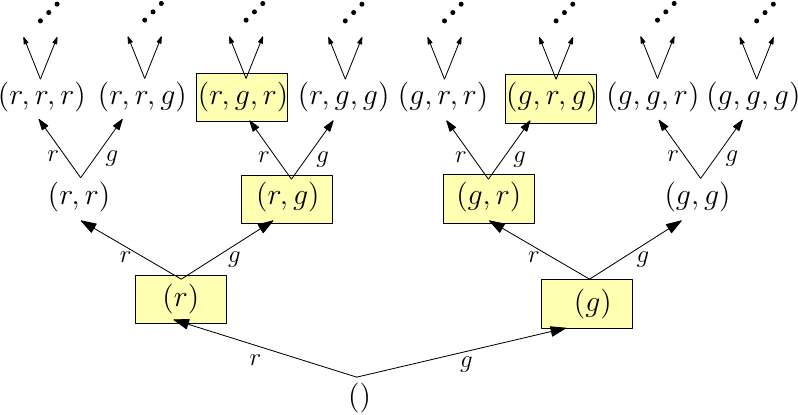}} 
  \subfigure[]{\includegraphics[width=.48\linewidth]{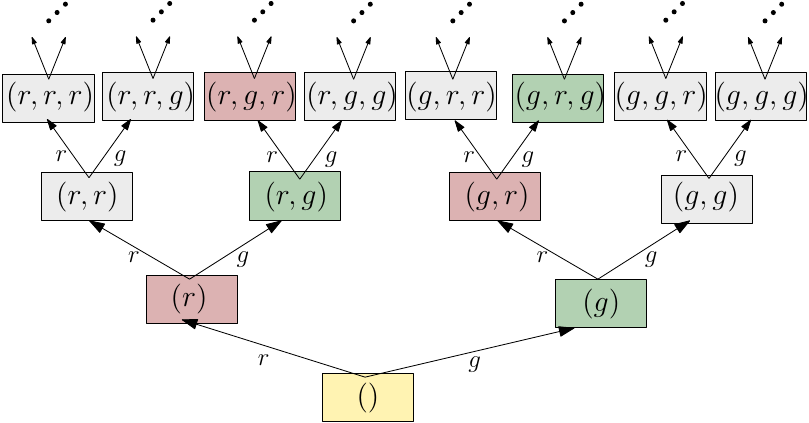}}  
  \caption{(a) State-relabeled history \ac{ITS} described in
    Example~\ref{ex:consistent_rot}, and the labeling function $\imapb$.
    States colored yellow 
    \BS{are the ones that do not violate the task description.} 
    (b) Equivalence classes induced by $\imap'$; the minimal sufficient refinement of $\imapb$. }
  \label{fig:red_green_gates}
\end{figure*}

\begin{example}[\textbf{\emph{Consistent rotation filter}}]
  \label{ex:consistent_rot} 
  This example considers a filtering problem from the perspective of
  an independent observer. Suppose the actions taken by the robot are
  not observable and the only information about the system is the
  history of readings coming from the robot's color sensor; for
  example, $(r,r,r,g,r,g)$.  Then, the history \ac{Ispace} is the set
  of all finite length sequences of elements of $Y$, that is,
  $\Ifshist=Y^*$, which refers to the free monoid generated by the
  elements of $Y$ (or the Kleene star of $Y$). Hence, the history
  \ac{ITS} can be represented as an infinite binary tree. The task is
  to determine whether the robot crosses the gates consistently (in a
  clockwise or counterclockwise manner) or not. The preimages of
  $\imapb\colon \Ifshist \to \Ifstask$ partition $\Ifshist$ into
  two subsets: one which the condition is satisfied (so far) and the
  others. The labeling induced by $\imapb$ is shown in
  Figure~\ref{fig:red_green_gates}(a).
\end{example}

\begin{claim}
  Task labeling $\imapb$ defined in Example~\ref{ex:consistent_rot} is
  not sufficient.
\end{claim}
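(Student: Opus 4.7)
The plan is to exhibit a direct counterexample to the sufficiency condition in Definition~\ref{def:sufficiency}. Since the history ITS $(\Ifshist, Y, \fmaphist)$ is deterministic, with $\Ifshist = Y^* = \{r,g\}^*$ and $\fmaphist(\his, y) = \his \cat y$, sufficiency of $\imapb$ reduces to the following: for all $\his_1, \his_2 \in \Ifshist$ and all $y \in Y$, if $\imapb(\his_1) = \imapb(\his_2)$ then $\imapb(\his_1 \cat y) = \imapb(\his_2 \cat y)$. I will produce two histories with equal $\imapb$-label whose extensions by a common observation carry different labels.

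Before producing the counterexample, I would pin down what $\imapb$ actually does on concrete finite histories. Because each region of the annulus has exactly one red and one green gate, the gate colors must alternate around the annulus. Therefore, a consistent clockwise (or counterclockwise) traversal produces a strictly alternating observation sequence, while any reversal forces the robot to cross the same gate twice in succession and hence produces two consecutive equal observations. Accordingly, $\imapb^{-1}(\text{``consistent''})$ is precisely the set of histories with no two consecutive equal letters, and $\imapb^{-1}(\text{``inconsistent''})$ is its complement (in accordance with the yellow/non-yellow coloring in Figure~\ref{fig:red_green_gates}(a)).

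With this characterization in hand, the counterexample is immediate: take $\his_1 = (r)$ and $\his_2 = (g)$. Both are labeled ``consistent'' since neither contains a consecutive repetition. However, $\his_1 \cat r = (r, r)$ contains a repeat and is thus labeled ``inconsistent'', whereas $\his_2 \cat r = (g, r)$ remains alternating and is labeled ``consistent''. Hence $\imapb(\his_1) = \imapb(\his_2)$ but $\imapb(\fmaphist(\his_1, r)) \neq \imapb(\fmaphist(\his_2, r))$, so $\imapb$ fails the sufficiency condition and the claim follows.

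There is no real obstacle here; the only non-routine step is translating the informal ``consistent rotation'' description into a concrete combinatorial condition on observation sequences. Once one observes that alternation of gate colors around the annulus forces reversals to manifest as consecutive repetitions, the counterexample writes itself, and indeed one can choose histories of length~$1$.
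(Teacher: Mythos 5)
Your proposal is correct and takes essentially the same route as the paper: exhibit two histories with the same task label whose one-observation extensions receive different labels, thereby violating Definition~\ref{def:sufficiency} directly. The paper uses the witnesses $\his=(r,g)$, $\his'=(r,g,r)$ with $y=g$, while you use the even shorter pair $(r)$, $(g)$ with $y=r$ and additionally spell out the (correct) characterization that a history is labeled consistent iff its observation sequence has no two consecutive equal colors — a detail the paper leaves implicit.
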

\begin{proof}
  There exist \ac{Istate}s $\his, \his'$ such that
  $\imapb(\his)=\imapb(\his')$ and there exists a $y$ for which
  $\imapb(\fmaphist(\his,y)) \neq \imapb(\fmaphist(\his',y))$; for
  example consider $\his=(r,g)$, $\his'=(r,g,r)$ and $y=g$. This shows
  that $\imapb$ violates Definition~\ref{def:sufficiency}.\qed
\end{proof}

\begin{figure}
    \centering
    \includegraphics[width=.6\linewidth]{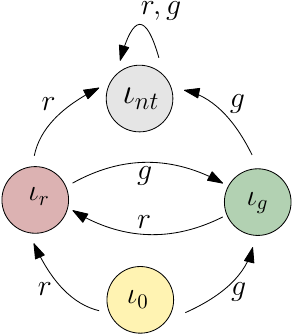}
    \caption{Quotient by $\imap'$ of the state-relabeled history \ac{ITS} shown in Figure~\ref{fig:red_green_gates}(b).}
    \label{fig:red_green_quotient_filter}
\end{figure}

\newnew{We can obtain a sufficient refinement of $\imapb$, defined as $\imap'\colon \Ifshist \to \{\is_0, \is_r, \is_g, \is_{nt}\}$. The corresponding equivalence classes are shown in Figure~\ref{fig:red_green_gates}(b). Its quotient \ac{DITS} is shown in Figure~\ref{fig:red_green_quotient_filter}.}

\begin{claim}
  $\imap'$ as defined above is a minimal sufficient refinement of~$\imapb$.
\end{claim}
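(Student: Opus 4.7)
The plan is to verify that $\imap'$ is a sufficient refinement of $\imapb$, and then show that any strictly coarser map which still refines $\imapb$ fails to be sufficient; by Remark~\ref{rem:UniqueMSR} this is enough.

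First, I would spell out the four preimages explicitly: $\imap'^{-1}(\is_0)$ contains only the empty history $\his_0$; $\imap'^{-1}(\is_r)$ is the set of nonempty alternating histories ending in $r$; $\imap'^{-1}(\is_g)$ is the analogous set ending in $g$; and $\imap'^{-1}(\is_{nt})$ is the set of histories in which two consecutive observations coincide. A direct case analysis on $y \in \{r, g\}$ then shows that class-level transitions are deterministic: for any $\his \in \imap'^{-1}(\is_r)$, the extension $\his\cat(r)$ creates two consecutive $r$'s and hence lies in $\is_{nt}$, while $\his\cat(g)$ preserves alternation and lies in $\is_g$; $\is_g$ is handled symmetrically; from $\his_0$ the extensions by $r$ and $g$ land in $\is_r$ and $\is_g$ respectively; and $\is_{nt}$ is absorbing. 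Moreover $\imapb^{-1}(0) = \imap'^{-1}(\is_{nt})$ and $\imapb^{-1}(1) = \imap'^{-1}(\is_0) \cup \imap'^{-1}(\is_r) \cup \imap'^{-1}(\is_g)$, so $\imap' \succeq \imapb$.

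For minimality, suppose $\imap''$ is sufficient with $\imap' \succ \imap'' \succeq \imapb$; then $\imap''$ arises from $\imap'$ by merging at least two of its equivalence classes. Refining $\imapb$ forbids merging $\is_{nt}$ with any of $\is_0, \is_r, \is_g$, so some merger within $\{\is_0, \is_r, \is_g\}$ must occur. I would rule out each candidate merger by exhibiting two histories in a single $\imap''$-class whose common extension lands in distinct $\imap''$-classes. Merging $\is_r$ and $\is_g$ places $(r)$ and $(g)$ together, yet extending by $r$ sends these to $\is_{nt}$ and $\is_r$ respectively, which remain distinct after the merge. Merging $\is_0$ with $\is_r$ places $\his_0$ and $(r)$ together, but extending by $r$ sends them to $\is_r$ and $\is_{nt}$; the merger of $\is_0$ with $\is_g$ is symmetric. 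Any triple merger contains one of these as a sub-violation. Each case contradicts Definition~\ref{def:sufficiency}.

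The main obstacle is purely combinatorial, namely correctly producing witness histories for each candidate merger; once the four-class transition structure is laid out, every case reduces to a single-step check of sufficiency.
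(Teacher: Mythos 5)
Your proposal is correct and takes essentially the same route as the paper's proof, which argues that merging $\is_{nt}$ with any other class would violate the refinement of $\imapb$, while any merge among $\is_0$, $\is_r$, $\is_g$ violates sufficiency. You merely make explicit what the paper leaves implicit: the preimages of $\imap'$, the witness histories for each forbidden merge, and the verification that $\imap'$ is itself a sufficient refinement.
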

\begin{proof}
  It follows from Proposition~\ref{prob:min_suff_ref} that if a
  labeling is not minimal then there is a minimal one that is strictly
  coarser and is still sufficient. However, neither of the subsets
  that belong to $\Ifshist/\imap'$ can be merged, since merging
  $\is_{nt}$ (colored gray in \new{Figure~\ref{fig:red_green_quotient_filter}}) with
  anything else violates the condition that $\imap'$ is a refinement
  of $\imapb$ and any pairwise merge of the others violate
  sufficiency. \qed
\end{proof}

Suppose the robot has a boundary detector, and it is capable of executing a bouncing motion that involves \emph{move forward} and \emph{rotate in place}. The set of actions is defined as $U=\{u_r, u_g\}$, in which $u_g$ represent a bouncing motion that allow the robot to traverse the green gate but not the red one, $u_r$ allows it to traverse the red gate but not the green one. For all the actions, the robot also bounces off of the boundary. We assume that the boundary detector and color sensor readings do not arrive simultaneously, and that the resulting trajectory will strike every open interval in the boundary of every region infinitely often, with non-zero, non-tangential velocities~\citep{BobSanCzaGosLav11}.

\begin{figure}
    \centering \hspace{-7.5mm}
    \includegraphics[width=1.06\linewidth]{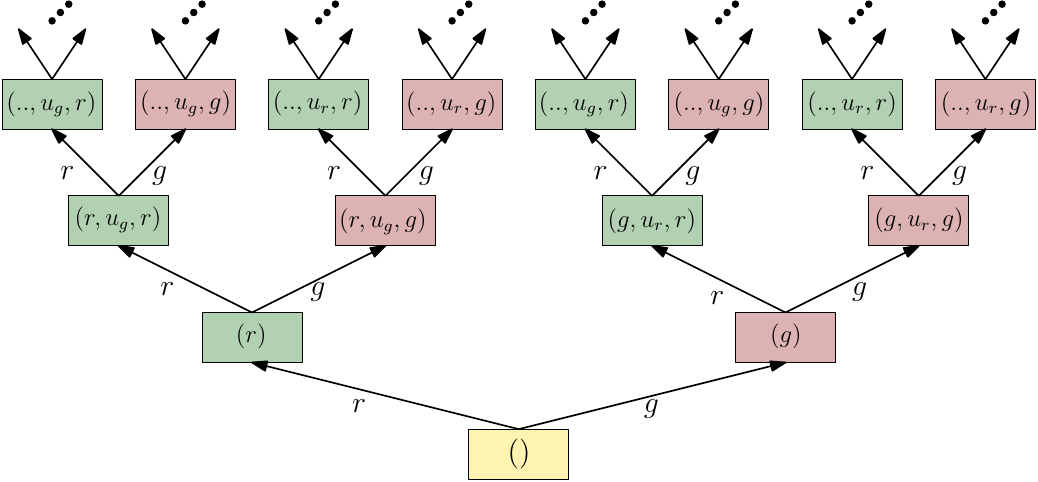}
    \caption{History ITS $(\Ifshist, U\times Y, \phi_{hist})$ restricted by $\pi$, labelled with $\pi$. The histories where $u_g$ is applied is colored green and where $u_r$ is applied in red. The initial state $\his_0$ is labeled with $()$.}
    \label{fig:red_green_policy_label}
\end{figure}

\begin{figure}
    \centering
    {\includegraphics[width=.5\linewidth]{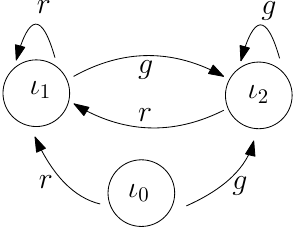}}
    \caption{\ac{DITS} describing the internal system solving the planning problem described in Example~\ref{ex:plan_consistent_rot}.}
    \label{fig:red_green_plan}
\end{figure}

\begin{example}[\textbf{\emph{Consistent rotation plan}}]\label{ex:plan_consistent_rot} 
We now consider a planning problem (that belongs to the class described in Problem~\ref{prob:planning_noDITS}) for which the goal is to ensure that the robot crosses the gates consistently. The history \ac{Ispace} of the planner is $\Ifshist=(U \times Y)^*$ and the preimages of $\imapb$ partition $\Ifshist$ into two sets; the histories that satisfy the predicate and the ones that do not. 
  
A policy $\pi \colon \ifshist \rightarrow \ifs$ can be determined over the states of history ITS such that $\pi(\his_0,\dots,y_k)=u_g$ if $y_k=r$ and $\pi(\his_0,\dots,y_k)=u_r$ if $y_k=g$. 
Let $(\ifshist', Y, \phi_{hist,\pi})$ be the restriction of history ITS by $\pi$ such that the states that are not reachable from $\his_0$ are pruned (see Figure~\ref{fig:red_green_policy_label}). The labeling $\pi$ defined over the states of $(\ifshist', Y, \phi_{hist,\pi})$ is sufficient as can be seen from the inspection of the ITS given in Figure~\ref{fig:red_green_policy_label}. \BS{Then, the following claim follows from Proposition~\ref{prop:min_DITS_pi}. 

\begin{claim}
The minimal DITS for $\pi$ is the quotient of $(\ifshist', Y, \phi_{hist,\pi})$ by $\pi$.  
\end{claim}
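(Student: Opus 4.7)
The plan is to invoke Proposition~\ref{prop:min_DITS_pi} directly, with the \ac{Imap} $\imap$ taken to be the identity (a bijection) on $\ifshist$, and then to argue that the minimal sufficient refinement of $\pi$ is $\pi$ itself. The hypotheses needed are already in hand: the text immediately before the claim asserts (by inspection of Figure~\ref{fig:red_green_policy_label}) that $\pi$, viewed as a labeling on the states of $(\ifshist', Y, \phi_{hist, \pi})$, is sufficient, and the example is an instance of Problem~\ref{prob:planning_noDITS}.

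First I would observe that any labeling function $\sigma$ is trivially a refinement of itself, so in particular $\pi \succeq \pi$. Combined with the sufficiency of $\pi$ (noted in the paragraph preceding the claim), $\pi$ qualifies as a sufficient refinement of $\pi$. The next step is to verify that it is \emph{minimal}: there cannot exist a sufficient $\imap''$ with $\pi \succ \imap'' \succeq \pi$, because a strict refinement $\pi \succ \imap''$ together with $\imap'' \succeq \pi$ is contradictory by antisymmetry of the refinement order on partitions. Hence $\pi$ itself is a minimal sufficient refinement of $\pi$.

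With this in place, I would apply Proposition~\ref{prop:min_DITS_pi} with $\imap = \mathrm{id}_{\ifshist}$ (a bijection, so the pair $(\imap,\pi)$ is of the required form) and with $\imap' = \pi$ as the minimal sufficient refinement of $\pi$. The proposition then yields that the \ac{DITS} derived from $(\ifshist', Y, \phi_{hist, \pi})$ by $\pi$, that is, the quotient $(\ifshist'/\pi,\, Y,\, \phi_{hist,\pi}/\pi)$ (which is isomorphic to the quotient system by Remark~\ref{remark:isomorphism_quotient}), is a minimal \ac{DITS} for $\pi$. This is exactly the claim.

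The only potentially subtle step is the minimality argument: one might worry that ``minimal sufficient refinement'' is nontrivial only when the starting labeling is not already sufficient. Here, since $\pi$ is already sufficient on $\ifshist'$, the refinement lattice above $\pi$ collapses at $\pi$ itself, and the result follows immediately. Uniqueness up to relabeling is guaranteed by Remark~\ref{rem:UniqueMSR}, so the quotient by $\pi$ is the minimal \ac{DITS} for $\pi$ in the sense of Definition~\ref{def:min_DITS_pi}.
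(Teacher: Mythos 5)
Your proposal is correct and follows essentially the same route as the paper: the paper's justification is precisely to note (by inspection of Figure~\ref{fig:red_green_policy_label}) that $\pi$ is sufficient on $(\ifshist', Y, \phi_{hist,\pi})$ and then invoke Proposition~\ref{prop:min_DITS_pi}, with the implicit step—which you spell out—that a sufficient labeling is its own minimal sufficient refinement, so the quotient by $\pi$ itself is the minimal DITS for $\pi$. Your explicit antisymmetry argument for why no sufficient $\imap''$ with $\pi \succ \imap'' \succeq \pi$ can exist is exactly the content the paper leaves tacit.
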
}
The quotient system, that is the minimal DITS, is shown in Figure~\ref{fig:red_green_plan}. Let $\ifs = \{i_0,i_1,i_2\}$ be the states of this quotient ITS. The respective plan $\pi'$ represented over the states of this minimal DITS is given as $\pi'(\is_0)=()$, $ \pi'(\is_1)=u_g$, $\pi'(\is_2)=u_r$.
  
\end{example}

\subsection{L-shaped corridor}

Consider a robot in an inverted L-shaped planar corridor
(Figure~\ref{fig:L_corridor}). Let $\mathcal{E}_l$ be the set
of all such environments such that $l_1,l_2 \leq l$, in which $l_1$
and $l_2$ are the dimensions of the corridor bounded by $l$. We assume
that the minimum length/width is larger than the robot radius, that
is, $1$. The state space $X$ is defined as the set of all pairs
$(q, E_i)$, in which $(q_1,q_2) \in E_i$, and
$E_i \in \mathcal{E}_{l}$. The action set is \new{one}
which corresponds to moving one step towards right or up; if the
boundary is reached, the state does not change. The robot has a sensor
that reports $1$ if the motion is blocked.

\begin{figure}
    \centering
    \includegraphics[width=.6\linewidth]{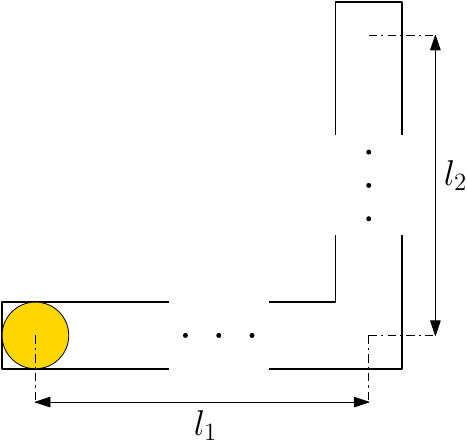}
    \caption{L-shaped corridor;
    $l_1,l_2\leq l$. For any corridor, the robot starts at the left-most part of the corridor which corresponds to the coordinates~$(0,0)$.}
    \label{fig:L_corridor}
\end{figure}

\begin{example}[\textbf{\emph{L-shaped corridor}}]
  \label{ex:LshapedCorridor}
  Consider a model-based history \ac{ITS} with $\his_0 \subset X$ that
  specifies the initial position as $q_0=(0,0)$ which corresponds to
  the left-most bottom square of the mirrored L-shape
  \new{(Figure~\ref{fig:L_corridor})} but does not specify the
  environment so that it can be any $E_i \in \E_l$.  Let $\Ifshist$ be
  its set of states
  and let $\imap_{ndet}\colon\Ifshist \to \pow(X)$ be an \ac{Imap} that
  maps the history \ac{Istate} $\his_k$ at stage $k$ to a subset of
  $X_k \subseteq X$. 
  Since $(X,U,f,h,Y)$, and $X_0=\his_0$ are known, transitions for the
  quotient system can be described by induction as
  $X_{k+1}=\hat{X}(X_k,u_k) \cap H(y_{k+1})$, in which
  $\hat{X}(X,u) := \{f(x,u)\mid x\in X\}$, and $H(y) := h^{-1}(y)\subseteq X$
  is the set of all states that could yield~$y$.
  By construction,
  $\imap_{ndet}$ is sufficient.
  Suppose $\imapb\colon \Ifshist/\imap_{ndet} \to\Ifst$ is a task
  labeling for localization that assigns each singleton a unique label
  and all the other subsets are labeled the same.  
  
\end{example}

\BS{
   \begin{claim}
   Let $\imap_{ndet}$ and $\imapb$ be the I-maps defined in Example~\ref{ex:LshapedCorridor}. Then, $\imap_{ndet}$ is a minimal sufficient
  refinement of $\imapb$.
  \end{claim}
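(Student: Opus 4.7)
The plan is to verify three properties of $\imap_{ndet}$: that it refines $\imapb$, that it is sufficient, and that no strictly coarser sufficient refinement exists. The first two follow directly from Example~\ref{ex:LshapedCorridor}. Because $\imapb$ is defined as a function on the quotient $\Ifshist/\imap_{ndet}$, every equivalence class of $\imap_{ndet}$ is contained in an equivalence class of $\imapb\circ\imap_{ndet}$, so $\imap_{ndet}\succeq\imapb$. Sufficiency was already noted: the inductive update $X_{k+1}=\hat X(X_k,u_k)\cap H(y_{k+1})$ depends only on $(X_k,u_k,y_{k+1})$, which is exactly what Definition~\ref{def:sufficiency} requires.

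For minimality I would argue by contradiction. Assume there is a sufficient $\imap'$ with $\imap_{ndet}\succ\imap'\succeq\imapb$. Then there exist distinct subsets $X,X'$ in the image of $\imap_{ndet}$ with $\imap'(X)=\imap'(X')$. Since $\imap'\succeq\imapb$, the subsets $X$ and $X'$ share a common $\imapb$-label; because $\imapb$ assigns each singleton a unique label, both $X$ and $X'$ must be non-singletons. Choose any $x^\ast\in X\setminus X'$ (swap $X$ and $X'$ if necessary). The key step is then to exhibit a finite action-observation sequence $(u_1,y_1),\dots,(u_n,y_n)$ whose iterated application sends $X$ and $X'$ to I-states carrying different $\imapb$-labels. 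Iterated sufficiency of $\imap'$, combined with $\imap'\succeq\imapb$, would force these two resulting I-states to share an $\imapb$-label, yielding the contradiction.

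To construct the sequence I would use a \emph{localization routine} adapted to the bounded L-shape: from $x^\ast=(q^\ast,E^\ast)$, execute a fixed alternation of \emph{right} and \emph{up} commands long enough (on the order of $l$) to bump against both arms of the L. The resulting blocked/unblocked trace $(y_1,\dots,y_n)$ encodes $l_1$, $l_2$, and the location of the L-corner, and hence uniquely identifies $x^\ast$ among all feasible states. Iterated application of the $\imap_{ndet}$ update then gives the singleton $\{f_n(x^\ast)\}$ when starting from $X$, since no other state of $X$ produces this trace, and either the empty set or a set disjoint from $\{f_n(x^\ast)\}$ when starting from $X'$, since $x^\ast\notin X'$. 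In all three sub-cases the resulting I-states carry different $\imapb$-labels (unique singleton label versus non-singleton label, or two different singleton labels), closing the argument.

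The main obstacle is the geometric step: verifying that such a globally discriminating localization routine exists. This reduces to a finite case analysis, enabled by the fact that the environment family $\mathcal E_l$ is finite (since $l_1,l_2\leq l$) and that a bump sensor combined with motion along the two coordinate axes distinguishes wall encounters in each arm of every L-shape in $\mathcal E_l$. Once this routine is pinned down explicitly, the rest of the proof is a routine application of iterated sufficiency and the elementary structure of $\imapb$.
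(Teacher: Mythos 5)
Your overall architecture is sound and is in fact more explicit than the paper's own proof: refinement and sufficiency are handled the same way, and for minimality the paper likewise argues that merging two non-singleton classes violates sufficiency while merging a singleton with anything violates refinement of $\imapb$; your scheme of iterated sufficiency plus an explicit distinguishing action-observation sequence is the right way to make that terse argument precise. The gap is in the key geometric step. The claim that a fixed bump routine yields a trace that ``uniquely identifies $x^\ast$ among all feasible states'' is false: when the starting position is unknown, the blocked/unblocked trace reveals only the distances to the walls relative to the current position, never the absolute position or $l_1$ itself. Concretely, a robot at $(0,0)$ in the corridor with $(l_1,l_2)=(2,3)$ and a robot at $(1,0)$ in the corridor with $(l_1,l_2)=(3,3)$ produce identical observations under \emph{every} action sequence, so no routine separates them. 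Both of your conclusions hinge on that false uniqueness claim: that the update of $X$ collapses to the singleton $\{f_n(x^\ast)\}$ (``no other state of $X$ produces this trace''), and that the update of $X'$ is empty or disjoint from it ``since $x^\ast\notin X'$''. The second assertion in particular does not follow from $x^\ast\notin X'$ alone, since some $x\in X'$ at a different position could generate the same trace and, a priori, even be sent to $f_n(x^\ast)$.

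The argument can be repaired, but it needs an observation you never make: because $\his_0$ fixes the initial position at $(0,0)$ and each observation reports whether the attempted step was blocked (moved versus stayed), every nonempty I-state in the image of $\imap_{ndet}$ consists of states sharing one common, history-determined position and differing only in the environment. Restricted to such sets, and within any single environment, the ``rights until blocked, then ups until blocked'' trace is injective; this is what actually gives (i) the singleton $\{f_n(x^\ast)\}$ from $X$, and (ii) from $X'$ either the empty set or a singleton whose state lies in a different environment, hence carries a different $\imapb$-label. Without restricting the uniqueness claim to these reachable I-states and adding the same-environment injectivity check, the pivotal step of your contradiction does not go through as written, even though the surrounding logic (both classes non-singleton, propagation of equal labels by iterated sufficiency, the final label clash) is correct.
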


\begin{proof} Consider a subset $X' \subseteq X$ with cardinality $|X'|>1$ and some $x' \in X$ as labels assigned by $\imap_{ndet}$.
The I-map $\imapb$ is not sufficient because the transition corresponding to $([X']_{\imapb}, (u,y))$
can lead to multiple labels $[x']_\imapb$.
By construction $\imap_{ndet}$ is sufficient.  The I-map $\imap_{ndet}$ is a minimal sufficient refinement of $\imapb$ because it is sufficient and because there does not exist a sufficient $\imap$ such that $\imap_{ndet} \succ \imap \succeq \imapb$. Suppose to the contrary that a sufficient $\imap$ exists, which would mean that some equivalence classes could be merged. However, this is not possible because merging any of the non-singleton subsets violates sufficiency (as shown for $\imapb$) and merging singletons with others violates that it is a refinement.
\end{proof}}

A policy
  can be described over $\Ifshist/\imap_{ndet}$; $u=(1,0)$ starting
  from $X_0$ until $y_k=1$ is obtained and applying $u=(0,1)$ starting
  from $X_k$ until $y_n=1$ is obtained, then it is found that
  $q=(k,n)$ and $E$ is the corridor with $l_1=k$, $l_2=n$.

\subsection{Diversity-based Inference (DBI) as a derived ITS}
\label{ssec:DBI}

In this and the following section, we present DBI and its probabilistic counterpart PSR as deterministic ITSs. The core idea in DBI \citep{RivSch94, RivSch93} is to gather
information about the environment through action-observation
experiments. The environment is modeled as a finite state Moore
machine (finite state automaton), formally defined as a $6$-tuple
$\mathcal{E} = (X, U, f, h, Y, x_0)$. This definition coincides with
our definition of the external system, but also contains an
\emph{initial state} $x_0$.\endnote{In~\citep{RivSch94}, the authors
  allow several different observations (predicate symbols) $p \in P$ to
  be observed at each state via a function
  $\gamma \colon X \times P \to \{0, 1\}$. We assume for simplicity
  that each $x\in X$ corresponds to a unique observation $y = h(x)$.}
Experiments on $\mathcal{E}$ are called \emph{tests}. Each test
$\new{t} = (\tilde{u}_m, y)$ consists of a finite action sequence
$\tilde{u}_m := (u_1, \ldots, u_m) \in U^m$, followed by an
observation $y \in Y$. The test $\new{t} = (\tilde{u}_m, y)$ is said to
\emph{succeed} from state $x \in X$, if
$\big(h \circ f^{\tilde{u}_m} \big)(x) = y$, where
\begin{equation} \label{Eq_Def_iterated_action}
  f^{\tilde{u}_m}(x)
  := f\big( \cdots f\big(f(x, u_1), u_2\big) \ldots, u_m\big).
\end{equation}
By convention, if $m=0$, then $f^{\tilde{u}_m}(x)=x$.  Thus, for each
test $\new{t}$ there exists a \emph{success function}
$S_{\new{t}} \colon X \to \{0,1\}$, for which $S_\new{t}(x) = 1$ if and only
if $\new{t}$ succeeds at $x$. In DBI, an equivalence relation $\sim_\cT$
is defined in the set of tests
$\cT := \big\{ (\tilde{u}_m, y) \mid \tilde{u}_m \in U^m, \, y \in
Y, \, m \in \N \big\}$ by setting
\[
  \new{t}_1 \sim_\cT \new{t}_2 \quad \Longleftrightarrow
  \quad S_{\new{t}_1}(x) = S_{\new{t}_2}(x), \quad \forall x \in X.
\]
The cardinality $K := |P_\T|$ of the set of equivalence classes
$P_\T = \big\{[\new{t}] \mid \new{t} \in \cT \big\}$ is called the
\emph{diversity} of $\cE$. The diversity of a finite state machine
satisfies $K \leq 2^{|X|} < \infty$. Each state $x \in X$ can thus be
labeled by a finite \emph{success vector}
\begin{equation} \label{Eq_Def_Success_vector}
  \xi(x) := \big(S_1(x), \ldots, S_K(x) \big)
\end{equation}
\new{in which}, for $k \in \{1, \ldots, K\}$, the functions $S_k := S_{\new{t}_k}$ are the success functions of tests $\new{t}_1, \ldots, \new{t}_K$, \new{whose respective} equivalence classes $[\new{t}_1], \ldots, [\new{t}_K]$ \new{constitute the set}
$P_\cT$.

\begin{proposition}[\textbf{\new{The success vector is a minimal sufficient refinement}}]\label{prop:XiIsMSR}
  Let $\cE = (X, U, f, h, Y, x_0)$ be a finite state Moore machine \new{with diversity $K$}.
  Then  $\xi \colon X \to \new{\{0,1\}^K}$ defined in \eqref{Eq_Def_Success_vector}
 is a minimal sufficient refinement of~$h$.
\end{proposition}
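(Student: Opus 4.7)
My plan is to verify, in turn, that $\xi$ is a refinement of $h$, that $\xi$ is sufficient, and that no strictly coarser sufficient labeling refines $h$; minimality will then follow directly from the definition. Refinement is immediate, because $\cT$ contains every test of the form $t = ((), y)$ with empty action sequence, for which $S_t(x) = 1$ if and only if $h(x) = y$. These success functions appear among the coordinates of $\xi$, so $\xi(x) = \xi(x')$ forces $h(x) = h(x')$, giving $\xi \succeq h$.

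For sufficiency, the key algebraic observation is that for any test $t = (\tilde{u}_m, y)$ with $\tilde{u}_m = (u_1, \ldots, u_m)$ and any $u \in U$, the definition \eqref{Eq_Def_iterated_action} gives $f^{\tilde{u}_m}(f(x,u)) = f^{\tilde{u}_m'}(x)$, where $\tilde{u}_m' := (u, u_1, \ldots, u_m)$. Thus if I set $t' := (\tilde{u}_m', y)$, then $t' \in \cT$ and $S_t(f(x,u)) = S_{t'}(x)$. Assuming $\xi(x) = \xi(x')$, I have $S_{\bar t}(x) = S_{\bar t}(x')$ for every $\bar t \in \cT$, and in particular for $t'$. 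Hence $S_t(f(x,u)) = S_t(f(x',u))$ for every test $t$, which is exactly $\xi(f(x,u)) = \xi(f(x',u))$.

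For minimality, I would suppose $\kappa$ is a sufficient labeling with $\xi \succeq \kappa \succeq h$ and deduce $\kappa \succeq \xi$, forcing $\xi$ and $\kappa$ to induce identical partitions so that no strict intermediate coarsening exists. If $\kappa(x) = \kappa(x')$, then an induction on the length of $\tilde{u}_m$, applying the sufficiency of $\kappa$ at each step, yields $\kappa(f^{\tilde{u}_m}(x)) = \kappa(f^{\tilde{u}_m}(x'))$ for every action sequence. Combined with $\kappa \succeq h$, this gives $h(f^{\tilde{u}_m}(x)) = h(f^{\tilde{u}_m}(x'))$, that is, $S_t(x) = S_t(x')$ for every $t \in \cT$, so $\xi(x) = \xi(x')$. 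The whole argument is mechanical once the shift identity for $f^{\tilde{u}_m}$ is in hand; I foresee no substantial obstacle beyond cleanly executing the induction and noting that passing from the finite basis $\{S_1, \ldots, S_K\}$ of representatives to all of $\cT$ is automatic, since $t \sim_\cT t_k$ implies $S_t = S_{t_k}$.
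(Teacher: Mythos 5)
Your proof is correct and follows essentially the same route as the paper's: refinement via the empty-action tests, sufficiency via the concatenation (shift) identity $S_t(f(x,u)) = S_{u\cat t}(x)$, and minimality by inductively propagating label equality of the coarser sufficient map along action sequences and then invoking its refinement of $h$. The only cosmetic difference is that you establish minimality directly, by showing any sufficient $\kappa$ with $\xi \succeq \kappa \succeq h$ must satisfy $\kappa \succeq \xi$, whereas the paper argues by contradiction using the existence of a minimal sufficient refinement from Remark~\ref{rem:UniqueMSR}; the computational content is identical.
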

\begin{proof}
  By setting $m=0$ in \eqref{Eq_Def_iterated_action}, we see that
  $\xi$ must be a refinement of~$h$. Suppose $\xi(x_0)=\xi(x_1)$ for
  some $x_0,x_1$, and let $u\in U$ be arbitrary. 
  We want to show that $\xi(f(x_0,u))=\xi(f(x_1,u))$.
  Let $(\tilde u_m,y)$ be any test and let $\tilde v_{m+1}=u\cat \tilde u_m$ denote the concatenation of $u$ as a prefix to $\tilde u_m$. 
  Then
  \begin{multline*}
       h\big(f^{\tilde u_m}(f(x_0,u))\big)= h\big(f^{\tilde v_{m+1}}(x_0)\big)=\\
       h\big(f^{\tilde v_{m+1}}(x_1)\big)=
       h\big(f^{\tilde u_{m}}(f(x_1,u)\big)
  \end{multline*}
  where the middle equality follows from the assumption that $\xi(x_0)=\xi(x_1)$.
  This means that all tests agree on $f(x_0,u)$ and $f(x_1,u)$, which implies that $\xi(f(x_0,u))=\xi(f(x_1,u))$. Hence, $\xi$ is sufficient.

  Suppose $\xi$ is not the minimal sufficient refinement of $h$.  Let $\xi'$ be a
  minimal sufficient refinement of $h$ which always exists, see
  Remark~\ref{rem:UniqueMSR}. Thus, we have
  $\xi\succ \xi'\succeq h$, so there are $x_0,x_1\in X$ with
  \begin{equation}
    \label{eq:xiprimeeq}
    \textrm{(i)} \,\,\, \xi'(x_0)=\xi'(x_1) \quad \text{and} \quad \textrm{(ii)} \,\,\,  \xi(x_0)\ne \xi(x_1).
  \end{equation}
  Since $\xi'$ is sufficient, it follows from~\eqref{eq:xiprimeeq}(i) that $\xi'(f(x_0, u)) = \xi'(f(x_1, u))$ for all actions $u \in U$. Using the sufficiency of $\xi'$ again $m$ times, it follows that
  \begin{equation} \label{Eq_xi_finite_observation_equality}
  \xi'\big(f^{\tilde u_m}(x_0)\big) = \xi'\big(f^{\tilde u_m}(x_1)\big)
  \end{equation}
  for all finite action sequences $\tilde{u}_m \in U^m$. Now, since $\xi'$ is a refinement of $h$, \eqref{Eq_xi_finite_observation_equality} implies that $h\big(f^{\tilde{u}_m}(x_0)\big) = h\big(f^{\tilde{u}_m}(x_1)\big)$ for all $\tilde{u}_m \in U^m$. By definition, this means that any test $(\tilde{u}_m,y)$ succeeds from $x_0$ if and only if it succeeds from~$x_1$. This implies $\xi(x_0) = \xi(x_1)$, contradicting \eqref{eq:xiprimeeq} (ii), and proves the claim. \qed
\end{proof}

\new{Now,} denote the concatenation of an action $u_0 \in U$ and a test
$\new{t} = (\tilde{u}_m, y) \in U^m \times Y$ by
\[
  u_0 \cat \new{t} := (u_0, u_1, \ldots, u_m, y) \in U^{m+1} \times Y.
\]
Since $S_\new{t} \big( f(x,u) \big) = S_{u \cat \new{t}}(x)$ for all
$\new{t} \in \T$ and $u \in U$, there exists for each $u \in U$ a
well-defined mapping $g_u \colon P_\cT \to P_\cT$ given by
$g_u\big([\new{t}]\big) := [u \cat \new{t}]$.  Furthermore, each $g_u$
defines a mapping (not necessarily a permutation)
$\alpha_u \colon\{1, \ldots, K\} \to \{1, \ldots, K\}$ by
\begin{equation} \label{Eq_Def_test_class_shuffle}
  \alpha_u(k) = n \quad \Longleftrightarrow \quad g_u\big([\new{t}_k]\big) = [\new{t}_n].
\end{equation}

\begin{definition}[\textbf{Update graph}]
  \label{Def_update_graph}
  Let $\mathcal{E} = (X, U, f, h, Y, x_0)$ be a finite state Moore
  machine with diversity $K = |P_\T|$. Let
  $P_\T = \big\{ [\new{t}_1], \ldots, [\new{t}_K] \big\}$ be the set of test
  equivalence classes with representatives
  $\new{t}_1, \ldots, \new{t}_K \in \T$, and let $S_k$ be the success
  function of test $\new{t}_k$.
Finally, let $\alpha_u$ be as in~\eqref{Eq_Def_test_class_shuffle}
  above.  The \emph{update graph} of $\cE$ is a state-relabeled \new{deterministic} transition system $\cG := (S, U, \new{\tau}, \sigma, Y, s_0)$ where $U$ and $Y$
  are as in $\cE$, and
  \begin{itemize}
  \item $S := \big\{ \big(S_1(x), \ldots, S_K(x) \big) \new{\in \{0,1\}^K} \mid x \in X  \big\}$,
  \item
    $\new{\tau}\big( (s_1, \ldots, s_K), u \big) := \big(s_{\alpha_u(1)},
    \ldots, s_{\alpha_u(K)} \big)$, 
  \item $\sigma\big( (s_1, \ldots, s_K) \big) = h(x)$, where $x \in X$
    \new{is such that} $s_k = S_k(x)$ for all $k \in \{1, \ldots, K\}$, and
    \item $s_0=(S_1(x_0),\dots,S_K(x_0))$.
  \end{itemize}
\end{definition}

A machine/environment $\cE$ is said to be \emph{reduced}, if for each
state $x \in X$ there exist tests $\new{t}_1, \new{t}_2 \in \cT$ for which
$S_{\new{t}_1}(x) \neq S_{\new{t}_2}(x)$. It was shown
in~\citep[\new{Theorem}~3]{RivSch94} that it is possible to simulate a reduced
environment $\cE$ by its update graph. We rephrase and prove this result in terms of isomorphisms of transition systems.
%
Two Moore machines
$(X, U, f, h, Y, x_0)$ and $(X', U, f', h', Y, x_0')$ (\new{both defined in terms of the same action and observation sets $U$ and $Y$}) 
are said to be \emph{isomorphic}, if there exists a bijective
map $g \colon X \to X'$ such that for all $x\in X$ and $u\in U$ we
have $f'(g(x),u)=g(f(x,u))$, $g(x_0)=x'_0$, and 
$(h'\circ g)(x)=h(x)$. The following is essentially
\citep[\new{Theorem} 3]{RivSch94}:

\begin{proposition}[\textbf{Update graph representation \new{of a Moore machine}}]
  \label{Prop_update_graph_isomorphism}
  Let $\cE = (X, U, f, h, Y, x_0)$ be a finite state Moore machine
  with a reduced state space $X$ and let $\cG := (S, U, \new{\tau}, \sigma, Y, s_0)$
  be the update graph of $\cE$. Then the function $\xi$ from~\eqref{Eq_Def_Success_vector}
  is an isomorphism between
  $\cE$ and $\cG$. 
\end{proposition}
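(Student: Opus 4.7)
The plan is to verify directly that $\xi$ satisfies each of the four conditions defining an isomorphism of Moore machines between $\cE$ and $\cG$: (i) $\xi$ is a bijection $X \to S$; (ii) $\xi(x_0) = s_0$; (iii) $\sigma \circ \xi = h$; and (iv) $\tau(\xi(x), u) = \xi(f(x,u))$ for all $x \in X$ and $u \in U$. Three of these are essentially built into Definition~\ref{Def_update_graph}: surjectivity of $\xi$ onto $S$ holds because $S$ is defined as the image of $\xi$; the initial-state identity is tautological from the choice of $s_0$; and $(\sigma \circ \xi)(x) = h(x)$ is the defining property of $\sigma$, which is well-defined precisely because Proposition~\ref{prop:XiIsMSR} shows $\xi$ refines $h$.

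The main algebraic content is condition (iv). I would unfold the $k$-th coordinate of $\tau(\xi(x), u)$ using \eqref{Eq_Def_test_class_shuffle}: it equals $S_{\alpha_u(k)}(x)$, and since the representative $t_{\alpha_u(k)}$ lies in the equivalence class $[u \cat t_k]$ and success functions are constant on such classes, this equals $S_{u \cat t_k}(x)$. Applying the identity $S_{u \cat t}(x) = S_t(f(x,u))$, which is an immediate consequence of \eqref{Eq_Def_iterated_action}, the expression simplifies to $S_k(f(x,u))$, namely the $k$-th coordinate of $\xi(f(x,u))$. Doing this for every $k \in \{1, \ldots, K\}$ yields (iv).

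The most delicate point, and the one I expect to be the main obstacle, is injectivity of $\xi$. Injectivity should follow from the reducedness hypothesis on $\cE$: if $\xi(x) = \xi(x')$, then $S_{t_k}(x) = S_{t_k}(x')$ for every $k \in \{1, \ldots, K\}$, and because the representatives $t_1, \ldots, t_K$ exhaust every test-equivalence class, we obtain $S_t(x) = S_t(x')$ for every $t \in \cT$; reducedness of $\cE$ should then force $x = x'$. The care needed here lies in making explicit the bridge between the paper's formulation of reducedness and the pairwise distinguishability statement that the argument actually uses; provided this link is spelled out (either by reading reducedness in the standard Rivest--Schapire sense, by restricting to the subautomaton reachable from $x_0$, or by collapsing test-equivalent states at the outset so that $\xi$ is injective by construction), the remainder of the argument is mechanical.
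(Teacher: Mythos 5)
Your proposal is correct and follows essentially the same route as the paper's proof: surjectivity from the definition of $S$, injectivity from reducedness via the fact that the representatives $t_1,\ldots,t_K$ exhaust all test classes, the coordinatewise computation $(\tau_u\circ\xi)(x) = \big(S_{\alpha_u(k)}(x)\big)_k = \big(S_{u\cat t_k}(x)\big)_k = \big(S_k(f(x,u))\big)_k = \xi(f(x,u))$, and the label and initial-state conditions holding by definition. Your worry about injectivity is well placed but is resolved exactly as you anticipate: the paper's proof tacitly reads reducedness in the standard Rivest--Schapire sense (distinct states are separated by some test), which is stronger than the literal per-state condition stated in the paper's definition, so spelling out that reading, as you do, is all that is needed.
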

\begin{proof}
  Let
  $P_\T = \big\{ [\new{t}_1], \ldots, [\new{t}_K] \big\}$ be the set of test
  equivalence classes in $\cE$ with representatives
  $\new{t}_1, \ldots, \new{t}_K \in \T$, and let $S_k$ be the success
  function of test $\new{t}_k$. Recall the definition of $\xi$
  from~\eqref{Eq_Def_Success_vector}.
  
  Then $\xi \colon X \to S$ is onto by definition of the set $S$. To
  show injectivity, assume $\xi(x) = \xi(x')$ for some $x, x' \in X$,
  which means that $S_k(x) = S_k(x')$ for all $k = 1,\ldots,K$. 
  Since $\cE$ is reduced, it suffices to show that
  $S_\new{t}(x) = S_\new{t}(x')$ for all tests $\new{t} \in \new{\T}$, because then
  $x = x'$.  Since every test $\new{t}$ satisfies $[\new{t}] = [\new{t}_k]$ for
  some $k$, the claim follows immediately.

  We still need to show that for all $(x,u) \in X \times U$, the
  functions $f_u(x) := f(x,u)$ and $\new{\tau}_u(s) := \new{\tau}(s, u)$ satisfy
  $\big( \new{\tau}_u \circ \xi\big)(x) = \big( \xi \circ f_u\big)(x)$.
According to Definition~\ref{Def_update_graph}, each $x \in X$ and $u \in U$ satisfies
  $(\new{\tau}_u \circ \xi)(x) = \big(S_{\alpha_u(1)}(x), \ldots,
  S_{\alpha_u(K)}(x) \big)$ where $\alpha_u(k) = n$ iff
  $[t_n] = [u \cat t_k]$. Thus,
  \begin{align} \label{Eq_commuting_T_xi_f}
    (\new{\tau}_u \circ \xi)(x) &= \big(S_{\alpha_u(1)}(x), \ldots, S_{\alpha_u(K)}(x) \big) \\
                       &= \big(S_{u \cat t_1}(x), \ldots, S_{u \cat t_K}(x) \big) \nonumber \\
                       &= \big(S_1(f(x,u)), \ldots, S_K(f(x,u)) \big) \nonumber \\
                       &= \xi \big( f(x,u) \big) = \big(\xi \circ f_u \big)(x). \nonumber
  \end{align}

  Since $\xi$ is a bijection, the labeling function $\sigma$ in
  Definition~\ref{Def_update_graph} is well defined, and satisfies
  $(\sigma \circ \xi)(x) = h(x)$ by definition. Finally, $\xi(x_0)=s_0$
  by the definition of~$s_0$. \qed
\end{proof}

To simulate the environment $\cE$ by $\cG$, one needs to set the
initial state $\hat{s}_0 = (s_1, \ldots, s_K) \in S$, which
corresponds to the initial state $x_0 \in X$ \new{for which $S_k(x_0) = s_k$ for all $k \in \{1, \ldots, K\}$.}

If we remove the assumption that $\cE$ is reduced, we can view the
function $\xi \colon X \to S$ in~\eqref{Eq_Def_Success_vector} as a
labeling that identifies those pairs of states that cannot be
differentiated by any test.

\begin{proposition}[\textbf{Update graph representation is a DITS}]
  \label{Prop_update_graph_minimality}
  Let $T_\cE = (X, U, f, h, Y)$ be the transition system corresponding
  to the finite state Moore machine $\cE = (X, U, f, h, Y, x_0)$, and
  let $\cG := (S, U, \tau, \sigma, Y)$ be the update graph of $\cE$.
  Then \new{$\cG$} is a DITS.
\end{proposition}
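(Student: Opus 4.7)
The plan is to show that the transition mechanism $\tau$ of $\cG$ is a genuine function $\tau\colon S \times U \to S$, so that transitions in $\cG$ are deterministic and $\cG$ qualifies as a DITS. Since the state space $S \subseteq \{0,1\}^K$ already has the form of an information space (tuples of test outcomes), verifying that $\tau$ is a total function into $S$ is all that remains.

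First I would note that as a formula acting on an arbitrary tuple, $(s_1,\ldots,s_K,u) \mapsto (s_{\alpha_u(1)},\ldots,s_{\alpha_u(K)})$ visibly produces a unique element of $\{0,1\}^K$, and depends only on the coordinates of $s$ together with the fixed index map $\alpha_u$ from~\eqref{Eq_Def_test_class_shuffle}. Hence $\tau$ is well-defined on $S \times U$ regardless of whether $s \in S$ has a unique preimage under $\xi$; no appeal to reducedness of $\cE$ is needed here.

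Second, I would verify that the image $\tau(s,u)$ actually lies in $S$. Pick any $s \in S$ and $u \in U$. By the definition of $S$, there is some $x \in X$ with $s = \xi(x) = \bigl(S_1(x),\ldots,S_K(x)\bigr)$. Reusing the chain of equalities established in display~\eqref{Eq_commuting_T_xi_f} of the proof of Proposition~\ref{Prop_update_graph_isomorphism} (which only relies on the identity $S_{u\cat t_k}(x)=S_k(f(x,u))$ and the definition of $\alpha_u$, and so does not require $\cE$ to be reduced), one obtains
\begin{equation*}
\tau(\xi(x),u) \;=\; \bigl(S_{\alpha_u(1)}(x),\ldots,S_{\alpha_u(K)}(x)\bigr) \;=\; \bigl(S_1(f(x,u)),\ldots,S_K(f(x,u))\bigr) \;=\; \xi(f(x,u)).
\end{equation*}
Since $f(x,u) \in X$, this shows $\tau(s,u) = \xi(f(x,u)) \in S$, so $\tau$ indeed restricts to a map $S \times U \to S$.

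Combining these two points, $\tau$ is a total, single-valued function from $S \times U$ to $S$, which by the discussion in Section~\ref{sec:trans_sys} is precisely what it means for $(S,U,\tau)$ to be a deterministic transition system. Together with the labeling $\sigma\colon S \to Y$ and the interpretation of $S$ as an \ac{Ispace} of success vectors, this makes $\cG$ a \ac{DITS}, as claimed. I do not anticipate a real obstacle in carrying this out; the only delicate point is making explicit that the image of $\tau$ lands in $S$, and this is handled by the same computation already used in Proposition~\ref{Prop_update_graph_isomorphism}.
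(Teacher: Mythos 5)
Your proof is correct, but it takes a genuinely different route from the paper. The paper argues indirectly: it invokes Proposition~\ref{prop:XiIsMSR} to get that $\xi$ is a sufficient labeling of $T_\cE$, notes that the quotient $T_\cE/\xi$ is a reduced Moore machine isomorphic to $\cG$ via Proposition~\ref{Prop_update_graph_isomorphism}, and then concludes determinism from Remark~\ref{rem:SuffDITS} (quotients by sufficient labelings of full systems are deterministic). You instead verify the defining property of a deterministic transition system directly: $\tau$ is single-valued because it is a coordinate-rearrangement formula depending only on $s$ and the fixed index map $\alpha_u$, and its image lies in $S$ because the chain of equalities in~\eqref{Eq_commuting_T_xi_f} gives $\tau(\xi(x),u)=\xi(f(x,u))\in S$ for any preimage $x$ of $s$ --- and you correctly observe that this computation uses only $S_{u\cat t}(x)=S_t(f(x,u))$ and the definition of $\alpha_u$, so reducedness is not needed. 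Your argument is more elementary and self-contained, and it sidesteps the slightly delicate step implicit in the paper's proof, namely that the quotient machine is reduced and that its update graph coincides with $\cG$ so that Proposition~\ref{Prop_update_graph_isomorphism} applies; the paper's route, in exchange, situates the result within the general machinery (sufficiency, quotients) and makes visible that $\cG$ is exactly the quotient of $T_\cE$ by the minimal sufficient refinement of $h$, which is the conceptually important point for minimality. One small item you leave implicit is the well-definedness of the labeling $\sigma$ when $\cE$ is not reduced (distinct preimages of the same success vector must share the same $h$-value); this follows because the $m=0$ tests make $\xi$ a refinement of $h$, a fact recorded at the start of the proof of Proposition~\ref{prop:XiIsMSR}, and is worth one sentence if you want $\cG$ as a state-relabeled system, though it does not affect the determinism claim itself.
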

\begin{proof} \new{According to Proposition~\ref{prop:XiIsMSR}, $\xi: X \to S$ in~\eqref{Eq_Def_Success_vector} is a sufficient labeling for $T_\cE$. By definition, any two states (equivalence classes) $[x_1], [x_2]$ of the quotient system $T_\cE / \xi$ satisfy $[x_1] \neq [x_2]$ if and only if
$\xi(x_1) \neq \xi(x_2)$. This implies that $T_\cE / \xi$ defines a reduced Moore machine which is isomorphic to $\cG$ according to Proposition~\ref{Prop_update_graph_isomorphism}.
The claim then follows from Remark~\ref{rem:SuffDITS}, which states that}
  quotients by sufficient labelings
  are deterministic transition systems. \qed
\end{proof}

\subsection{Predictive state representations (PSRs)}

\emph{Predictive state representation (PSR)} \citep{LitSut01, JamSin04}, like its deterministic
predecessor DBI, is based on the idea of performing tests on the environment. The difference is that PSR assumes a statistical description of the internal-external coupling, expressed via success probabilities of tests, conditioned on past histories. PSRs have been shown to be more general than POMDPs \citep{CasLitZha97} in the sense that every POMDP model can be represented via the corresponding PSR.

Since the introduction of the original PSR, several variations of the concept have been proposed in connection to different learning algorithms that aim to discover the set of core tests and learn the associated prediction functions \citep{JamSin04, BooSidGor11, BooGreGor13}. So-called TPSRs \citep{RosGorThr04} are adaptations of the concept where, instead of maintaining vectors of probabilities over a finite set of core tests, a linear combination of a larger set of tests is maintained instead.


We focus on the original formulation of the PSR model and show how it can be expressed in our formalism as a DITS. Let $\eta_k := (\tilde{u}_{k-1}, \tilde{y}_k)$ denote the \emph{history
  I-state} at stage $k$ (including the $k$th observation, but not the
$k$th action). In addition let
$\tilde{y}_{k,m} := (y_k,\ldots,y_{k+m})$ and similarly
$\tilde{u}_{k,m} := (u_k,\ldots,u_{k+m})$ for all $m \in \N$. In PSR,
action-observation sequences
$\new{t} = \big(\tilde{u}_{m-1}^\new{t}, \tilde{y}_{m}^\new{t} \big)$ are
called \emph{tests}. At stage $k \in \N$, a PSR model maintains a
sufficient statistic for computing the conditional \emph{success probabilities}
\begin{equation} \label{Eq_PSR_cond_prob}
  P_{\eta_k}(\new{t}) := P \big(\tilde{y}_{k,m} = \tilde{y}_{m}^\new{t} \mid \tilde{u}_{k,m-1} = \tilde{u}_{m-1}^\new{t}, \, \eta_k \big)
\end{equation}
for tests $\new{t} = \big(\tilde{u}_{m-1}^\new{t}, \tilde{y}_{m}^\new{t} \big)$ of arbitrary length $m \geq 1$.

The idea of PSR is to identify minimal \emph{core sets of tests}
$Q = (\new{t}_1, \ldots, \new{t}_m)$ which have the property that, given any
test $\new{t} \notin Q$ and some history $\eta_k \in \Ifshist$, it is
possible to compute the success probability of $\new{t}$ as
$P_{\eta_k}(\new{t}) = f_\new{t}\big( Q(\eta_k) \big)$ where
$Q(\eta_k) := \big(P_{\eta_k}(\new{t}_1), \ldots, P_{\eta_k}(\new{t}_m)
\big)$ is the \emph{prediction vector} for the set $Q$ and $f_\new{t}$ is
the \emph{prediction function} associated with $\new{t}$. In
\emph{linear} PSR, the space of admissible prediction functions is
restricted to linear transformations (vectors) $r_\new{t} \in \R^{|Q|}$
so that $ f_\new{t}(Q(\eta_k)) = r_\new{t} \cdot Q(\eta_k)$ for all
$\new{t}, \eta_k$.

Formally, a PSR is a $5$-tuple $(U, Y, Q, F, m_0)$, where $U$ is the
set of actions, $Y$ is the set of observations, $Q$ is a \emph{core
  set of tests}, $F$ is the set of \emph{prediction functions}, and
$m_0 \in [0,1]^{|Q|}$ is the \emph{initial prediction vector} after
seeing the null history $\eta_0 = \varnothing$. A PSR model provides a
complete (probabilistic) description of the action-observation
dynamics, because the prediction vector $Q(\eta_k)$ can be updated
with each new action-observation. For this, only the (finite number
of) prediction functions $f_{(u,y)}$ and $f_{(u, y) \cat \new{t}_m}$ need
to be known, corresponding to all possible action-observation pairs
$(u,y)$ and to concatenations of these with the core tests
$\new{t}_m \in Q$. Then the update to $Q(\eta_{k+1})$, where
$\eta_{k+1} = \eta_k \cat (u,y)$, is obtained through the function
$\phi_{\textrm{PSR}} \colon [0,1]^m \times (U \times Y) \to [0,1]^m$
defined by
\begin{multline}
  \phi_{\textrm{PSR}}\big(Q(\eta_k), (u, y)  \big) := \\ \Big(\phi_1\big(Q(\eta_k), (u, y) \big), \ldots,
  \phi_m\big(Q(\eta_k), (u, y) \big) \Big),
\end{multline}
where the functions
$\phi_i \colon [0,1]^m \times (U \times Y) \to [0,1]$ are given for each
$i \in \{ 1, \ldots, m \}$ by
\begin{align*} \label{Eq_Def_PSR_update}
  \phi_i\big(Q(\eta_k), (u, y) \big) &:= P_{\eta_k \cat (u,y)}(\new{t}_i) \\
&= \frac{P_{\eta_k}\big((u,y) \cat \new{t}_i \big)}{P_{\eta_k}\big((u,y)\big)} = \frac{f_{(u, y) \cat \new{t}_i}\big(Q(\eta_k)\big)}{f_{(u, y)}\big(Q(\eta_k)\big)}.
\end{align*}
Thus, a PSR with a core set of tests $Q = (\new{t}_1, \ldots, \new{t}_m)$ is
a DITS $\big(\Ifspsr, U \times Y, \phi_{\textrm{PSR}} \big)$, where
$\Ifspsr := \big\{ Q(\eta_k) \mid \eta_k \in \Ifshist \big\}$. The
corresponding I-map $\kappa_\textrm{PSR} \colon \Ifshist \to \Ifspsr$ is
given by $\kappa_\textrm{PSR}(\eta) := Q(\eta)$.

\section{Conclusions and Future Work}\label{sec:discussion}

This paper introduced a mathematical framework for determining minimal filters and minimal feasible policies by comparing ITSs over information spaces. The minimality results are quite general without imposing strong restrictions on the underlying dynamical system (external system). We show that a large class of problems can be posed and analyzed under this framework. 

Nevertheless, there are several opportunities to expand the general theory.  For example, we assumed that $u$ is both the output of a policy and the actuation stimulus in the physical world; more generally, we should introduce a mapping from an action symbol $\sigma \in \Sigma$ to a control function $\histu \in \histU$ so that plans are expressed as $\pi\colon\ifs \rightarrow \Sigma$ and each $\sigma=\pi(\iota)$ produces energy in the physical world via a mapping from $\Sigma$ to~$\histU$.

It is also important to extend the models to continuous time.  In this case, the 
sensing and action histories are time parameterized functions, rather than sequences. 
Sufficiency must be defined in terms information mappings that apply to any time 
slice from $0$ to $t' < t$ for a history that runs from time $0$ to $t$, rather than 
only over discrete time steps. Some ground work has already been done in~\citep{Lav06}.

Another direction is to consider the hardware and actuation models as variables, and fix other model components. This is similar to the class of problems related to co-design for which the design process of a robot given resource constraints (sensors and actuators) is sought to be automated~\citep{SheOkaSab21,Cen16,ZarCenFra21}.

In this paper, \BS{we considered the theoretical limits on the DITS necessary to express a policy defined over a history ITS. However, the problem of finding such a DITS remains as an open algorithmic challenge.}
Furthermore, we only considered feasible policies.
An interesting direction is to analyze the information requirements for policies that are optimal with respect to a relevant objective and the trade-off between optimality and minimality. \BS{This will amount to an ordering (potentially a partial ordering) of policies in terms of (expected) cost and the minimal DITS to express such policy.}

In an external-internal coupled system, the different 
components, I-map $\kappa$, the 
information transition function $\phi$ and the policy $\pi$ share the total 
complexity (information content) of the internal information processing system. Of 
particular interest would be to explore the trade-offs between these components in 
terms of efficient encoding of data-structures and their successful decoding in terms 
of policies. Ultimately this could lead to fundamental characterisations of 
interaction system information content in the spirit of the minimum description 
length principle proposed in~\citep{Ris78}.

The mathematical theory of coupling as presented in this paper is very general.
Coupling in discrete dynamical systems, and of finite automata, are special cases of it, and even continuous systems can be seen as such.
Connections to other work on coupling such as~\citep{Spi15} is to be explored.
Dynamic coupling has been proposed as a viable approach to a mathematical
modelling of cognition from the enactivist perspective~\citep{MonHerZie08,Favela2020}.
The existing literatutre on the latter uses bits and pieces of dynamical
systems with sporadic applications in different areas of cognitive science,
but a systematic unifying study is still to be seen, especially one that
has meaningful ramifications to robotics and algorithmic design. An attempt
to connect philosophical ideas with those of this paper was presented
by the authors in~\citep{WeiSakLav22}.

A grand challenge remains: The results here are only a first step toward producing a more complete and unique theory of robotics that clearly characterizes the relationships between common tasks, robot systems, environments, and algorithms that perform filtering, planning, or learning.  We should search for lattice structures that play a role similar to that of language class hierarchies in the theory of computation.  This includes the structures of the current paper and the sensor lattices of \citep{Lav12b,ZhaShe21}.  Many existing filtering, planning, and learning methods can be formally characterized within this framework, which would provide insights into relative complexity, completeness, minimality, and time/space/energy tradeoffs.

\begin{dci}
The authors declared no potential conflicts of interest with respect to the research, authorship, and/or publication of this article.
\end{dci}

\begin{funding}
  This work was supported by a European Research Council Advanced
  Grant (ERC AdG, ILLUSIVE: Foundations of Perception Engineering,
  101020977), Academy of Finland (projects PERCEPT 322637, CHiMP
  342556), and Business Finland (project HUMOR 3656/31/2019).
\end{funding}

\theendnotes

\bibliographystyle{sageH}
\bibliography{Bibliography}
\end{document}